\newcommand{\eqdef}{\buildrel \mbox{\tiny\textrm{def}} \over =}
\newcommand{\IGApprox}{\widetilde{IG}}
\definecolor{COCcolor}{HTML}{1F77B4}
\definecolor{WLcolor}{HTML}{FDB863}
\definecolor{SLcolor}{HTML}{E66101}
\definecolor{WRcolor}{HTML}{B2ABD2}
\definecolor{SRcolor}{HTML}{5E3C99}
\definecolor{DiffAIGreen}{HTML}{078307}
\newcommand{\pref}{\prettyref}
\newtheorem{definition}{Definition}
\newcommand{\problemname}{\textsc{ExactLine}}
\newcommand{\restr}[2]{#1{_{\restriction{#2}}}}
\newcommand{\exactline}[2]{\mathcal{P}\big(\restr{#1}{\overline{#2}}\big)}
\title{Computing Linear Restrictions of Neural Networks}
\author{
    Matthew~Sotoudeh \\ Department of Computer Science \\ University of
    California, Davis \\ Davis, CA 95616 \\ \texttt{masotoudeh@ucdavis.edu} \\
    \And Aditya~V.\ Thakur \\ Department of Computer Science \\ University of
    California, Davis \\ Davis, CA 95616 \\ \texttt{avthakur@ucdavis.edu} \\ }
\begin{document}

\maketitle

\begin{abstract} A linear restriction of a function is the same function with
    its domain restricted to points on a given line. This paper addresses the
    problem of computing a succinct representation for a linear restriction of
    a piecewise-linear neural network. This primitive, which we call
    $\problemname$, allows us to exactly characterize the
    result of applying the network to all of the infinitely many points on a
    line.  In particular, $\problemname$ computes a partitioning of the given
    input line segment such that the network is affine on each partition. We
    present an efficient algorithm for computing $\problemname$ for networks
    that use ReLU, MaxPool, batch normalization, fully-connected,
    convolutional, and other layers, along with several applications.  First,
    we show how to exactly determine decision boundaries of an ACAS Xu neural
    network, providing significantly improved confidence in the results
    compared to prior work that sampled finitely many points in the input
    space. Next, we demonstrate how to exactly compute integrated gradients,
    which are commonly used for neural network attributions, allowing us to
    show that the prior heuristic-based methods had relative errors of 25-45\%
    and show that a better sampling method can achieve higher accuracy with
    less computation.  Finally, we use $\problemname$ to empirically falsify
    the core assumption behind a well-known hypothesis about adversarial
    examples, and in the process identify interesting properties of
    adversarially-trained networks.
\end{abstract}

\section{Introduction}
\label{sec:Introduction}

The past decade has seen the rise of deep neural networks
(DNNs)~\cite{Goodfellow:DeepLearning2016} to solve a variety of problems,
including image recognition~\cite{Szegedy:CVPR2016,Krizhevsky:CACM2017},
natural-language processing~\cite{BERT:CoRR2018}, and autonomous vehicle
control~\cite{julian2018deep}.
However, such models are difficult to meaningfully interpret and check for
correctness. Thus, researchers have tried to understand the behavior of such
networks. For instance, networks have been shown to be vulnerable to
\emph{adversarial examples}---inputs changed in a way imperceptible to humans
but resulting in a misclassification by the
network~\citep{Szegedy:ICLR2014,Goodfellow:ICLR2015,DeepFool:CVPR2016,carlini2018audio}--and
\emph{fooling examples}---inputs that are completely unrecognizable by humans
but classified with high confidence by DNNs~\citep{Nguyen:CVPR2015}. The
presence of such adversarial and fooling inputs as well as applications in
safety-critical systems has led to efforts to verify, certify, and improve
robustness of
DNNs~\cite{Bastani:NIPS2016,reluplex:CAV2017,Ehlers:ATVA2017,Huang:CAV2017,ai2:SP2018,Bunel:NIPS2018,Weng:ICML2018,Singh:POPL2019,Anderson:PLDI2019,pmlr-v89-croce19a}.
Orthogonal approaches help visualize the behavior of the
network~\citep{Zeiler:ECCV2014,Yosinski:CoRR2015,Bau:CVPR2017} and interpret
its
decisions~\citep{Baehrens:JMLR2010,Shrikumar:CoRR2016,Sundararajan:ICML2017,tcav:ICML2018,Guidotti:CSUR2018}.
Despite the tremendous progress, more needs to be done to help understand DNNs
and increase their adoption
\citep{Ching:RSIF2017,DBLP:journals/bib/MiottoWWJD18,Hosny:NatureReviews2018,Mendelson:AJR2019}.

In this paper, we present algorithms for computing the $\problemname$
primitive: given a piecewise-linear neural network (e.g.~composed of
convolutional and ReLU layers) and line in the input space $\overline{QR}$, we
partition $\overline{QR}$ such that the network is affine on each partition.
Thus, $\problemname$ precisely captures the behavior of the network for the
infinite set of points lying on the line between two points. In effect,
$\problemname$ computes a succinct representation for a linear restriction of a
piecewise-linear neural network; a \emph{linear restriction} of a function is
the same function with its domain restricted to points on a given line.
We present an efficient implementation of $\problemname$~(\pref{sec:Primitive})
for piecewise-linear neural networks, as well as examples of how $\problemname$
can be used to understand the behavior of DNNs.
In \pref{sec:ACAS} we consider a problem posed by \citet{ReluVal:Usenix2018},
viz., to determine the classification regions of ACAS Xu~\cite{julian2018deep},
an aircraft collision avoidance network, when linearly interpolating between
two input situations.  This characterization can, for instance, determine at
which precise distance from the ownship a nearby plane causes the network to
instruct a hard change in direction.
\pref{sec:IntegratedGradients} describes how $\problemname$ can be used to
exactly compute the \emph{integrated gradients}~\citep{Sundararajan:ICML2017},
a state-of-the-art network attribution method that until now has only been
approximated. We quantify the error of previous heuristics-based methods, and
find that they result in attributions with a relative error of 25-45\%.
Finally, we show that a different heuristic using trapezoidal rule can produce
significantly higher accuracy with fewer samples.
\pref{sec:CountingRegions} uses $\problemname$ to probe interesting properties
of the neighborhoods around test images. We empirically reject a fundamental
assumption behind the Linear Explanation of Adversarial
Examples~\citep{Goodfellow:ICLR2015} on multiple networks. Finally, our results
suggest that DiffAI-protected~\citep{diffai2018} neural networks exhibit
significantly less non-linearity in practice, which perhaps contributes to
their adversarial robustness.
We have made our source code available at
\href{https://doi.org/10.5281/zenodo.3520097}{https://doi.org/10.5281/zenodo.3520097}.

\section{The $\problemname$ Primitive}
\label{sec:Primitive}

Given a piecewise-linear neural network $f$ and two points $Q, R$ in the input
space of $f$, we consider the \emph{restriction of $f$ to $\overline{QR}$},
denoted $\restr{f}{\overline{QR}}$, which is identical to the function $f$
except that its input domain has been restricted to $\overline{QR}$. We now want
to find a \emph{succinct representation} for $\restr{f}{\overline{QR}}$ that we
can analyze more readily than the neural network corresponding to $f$.  In this
paper, we propose to use the $\problemname$ representation, which corresponds to
a \emph{linear partitioning} of $\restr{f}{\overline{QR}}$, defined below.

\begin{definition}
    \label{def:exactline}
    Given a function $f: A \to B$ and line segment $\overline{QR} \subseteq A$,
    a tuple $(P_1, P_2, P_3, \ldots, P_n)$ is a \emph{linear
    partitioning of $\restr{f}{\overline{QR}}$}, denoted $\exactline{f}{QR}$
    and referred to as ``$\problemname$ of $f$ over $\overline{QR}$,'' if: (1)
    $\{ \overline{P_iP_{i+1}} \mid 1 \leq i < n \}$ partitions $\overline{QR}$
    (except for overlap at endpoints); (2) $P_1 = Q$ and $P_n = R$; and (3) for
    all $1 \leq i < n$, there exists an affine map $A_i$ such that $f(x) =
    A_i(x)$ for all $x \in \overline{P_iP_{i+1}}$.
\end{definition}

In other words, we wish to \emph{partition $\overline{QR}$ into a set of pieces
where the action of $f$ on all points in each piece is affine.} Note that,
given $\exactline{f}{QR} = (P_1, \ldots, P_n)$, the corresponding affine
function for each partition $\overline{P_iP_{i+1}}$ can be determined by
recognizing that affine maps preserve ratios along lines. In other words, given
point $x = (1 - \alpha)P_i + \alpha P_{i+1}$ on linear partition
$\overline{P_iP_{i+1}}$, we have $f(x) = (1 - \alpha)f(P_i) + \alpha
f(P_{i+1})$. In this way, $\exactline{f}{QR}$ provides us a succinct and
precise representation for the behavior of $f$ on all points along
$\overline{QR}$.

Consider an illustrative DNN taking as input the age and income
of an individual and returning a loan-approval score and premium that should be
charged over a baseline amount:
\begin{equation}
    f(X = (x_0, x_1)) = \mathrm{ReLU}\left(
    A(X) \right),
    \text{ where }
    A(X) = 
    \begin{bmatrix}
        -1.7 & 1.0 \\
        2.0 & -1.3 \\
    \end{bmatrix}
    X +
    \begin{bmatrix}
        3 \\
        3 \\
    \end{bmatrix}
    \label{eq:loan_net}
\end{equation}
Suppose an individual of $20$ years old making $\$30$k/year ($Q = (20,
30)$) predicts that their earnings will increase linearly every year until they
reach $30$ years old and are making $\$50$k/year ($R = (30, 50)$). We wish to
understand how they will be classified by this system over these 10 years.
We can use $\problemname$ (\pref{def:exactline}) to compute $\exactline{f}{QR}
= (P_1 = Q, P_2 = (23.\overline{3}, 36.\overline{6}),$ $P_3 = (26.\overline{6},
43.\overline{3}),$ $P_4 = R)$, where $\restr{f}{QR}$ is \emph{exactly}
described by the following piecewise-linear function
(\pref{fig:transformer_mlf}):
\begin{equation}
    \restr{f}{\overline{QR}}(x) =
    \small
    \begin{cases}
        \begin{bmatrix}
            0 & 0 \\
            2 & -1.3
        \end{bmatrix}x +
        \begin{bmatrix}
            0 \\
            3
        \end{bmatrix},
        & x \in \overline{QP_2} \\[10pt] 
        \begin{bmatrix}
            -1.7 & 1 \\
            2 & -1.3
        \end{bmatrix}
        x
        +
        \begin{bmatrix}
            3 \\
            3
        \end{bmatrix},
        & x \in \overline{P_2P_3} \\[10pt]
        \begin{bmatrix}
            -1.7 & 1 \\
            0 & 0
        \end{bmatrix}
        x
        +
        \begin{bmatrix}
            3 \\
            0
        \end{bmatrix},
        & x \in \overline{P_3R}
    \end{cases}
    \label{eq:loan_net_explicit}
\end{equation}

\begin{figure}[t]\centering
    \includegraphics[width=.8\linewidth]{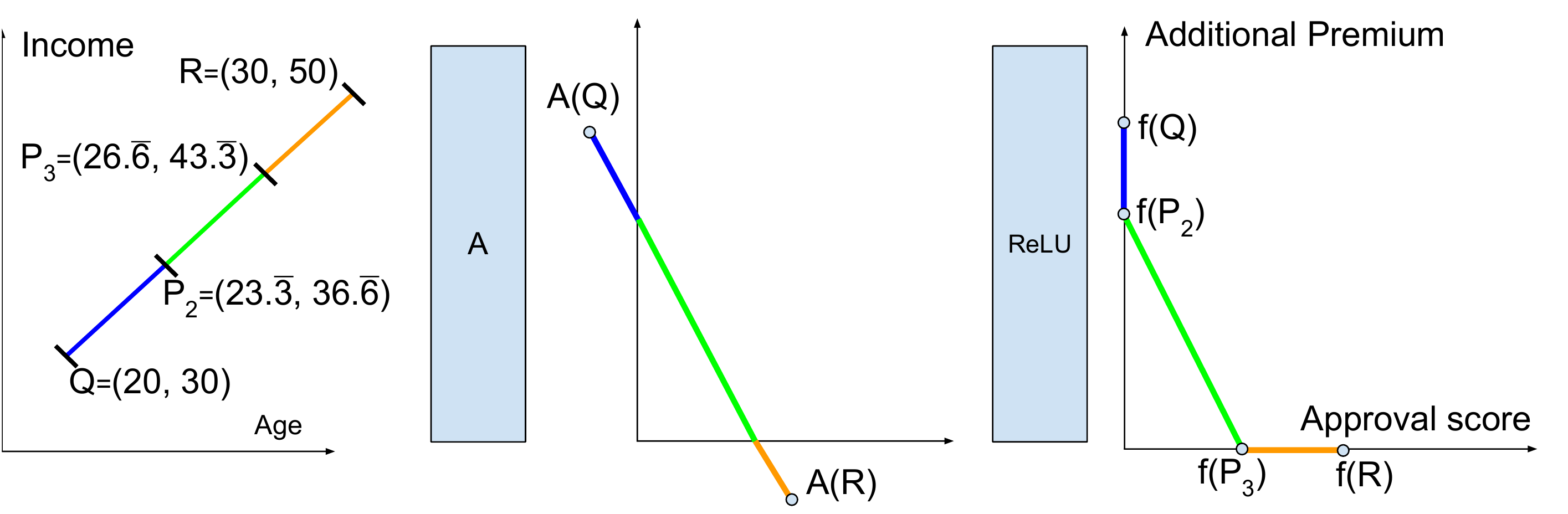}
    \caption{Computing the linear restriction of $f$ (\pref{eq:loan_net}) using
    $\problemname$. The input line segment $\overline{QR}$ is divided into
    three linear partitions such that the transformation from input space to
    output space (left plot to right plot) is affine
    (\pref{eq:loan_net_explicit}). Tick marks (on the left) are used in figures
    throughout this paper to indicate the partition endpoints $(P_1, P_2,
    \ldots, P_n)$.}
    \label{fig:transformer_mlf}
\end{figure}

\paragraph{Other Network Analysis Techniques}
Compared to prior work, our solution to $\problemname$ presents an interesting
and unique design point in the space of neural-network analysis. Approaches
such as~\citep{reluplex:CAV2017,Xiang:arxiv2017,Bunel:NIPS2018} are precise,
but exponential time because they work over the entire input domain. On another
side of the design space, approaches such as those used
in~\citep{ai2:SP2018,Xiang:arxiv2017,Xiang:ACC2018,Weng:ICML2018,Dutta:NFM2018,ReluVal:Usenix2018}
are significantly faster while still working over the full-dimensional input
space, but accomplish this by trading analysis precision for speed. This
trade-off between speed and precision is particularly well-illuminated
by~\citep{jordan2019provable}, which monotonically refines its analysis when
given more time. In contrast, the key observation underlying our work is that
we can perform \emph{both} an efficient (worst-case polynomial time for a fixed
number of layers) and precise analysis by \emph{restricting the input domain}
to be one dimensional (a line). This insight opens a new dimension to the
discussion of network analysis tools, showing that \emph{dimensionality} can be
traded for significant gains in \emph{both} precision and efficiency (as
opposed to prior work which has explored the tradeoff primarily along the
precision and efficiency axes under the assumption of high-dimensional input
regions). \citet{hanin2019complexity} similarly considers one-dimensional input
spaces, but the paper is focused on a number of theoretical properties and does
not focus on the algorithm used in their empirical results.

\paragraph{Algorithm}
We will first discuss computation of $\problemname$ for individual layers.
Note that by definition,
$\problemname$ for affine layers does not introduce any new linear partitions.
This is captured by~\pref{thm:affine-exactline} (proved
in~\pref{app:affine-exactline}) below:
\begin{restatable}{theorem}{ThmAffineExactline}
    \label{thm:affine-exactline}
    For any affine function $A : X \to Y$ and line segment $\overline{QR}
    \subset X$, the following is a suitable linear partitioning
    (\pref{def:exactline}): $\exactline{A}{QR} = (Q, R)$.
\end{restatable}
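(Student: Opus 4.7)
The plan is to verify, one at a time, the three conditions in \pref{def:exactline} for the candidate tuple $(P_1, P_2) = (Q, R)$ with $n = 2$. Because $A$ is already affine on the entire space $X$, no genuine subdivision of $\overline{QR}$ should be required, so the trivial partition consisting of the single segment $\overline{QR}$ itself ought to suffice.

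First I would handle condition~(2), which asserts $P_1 = Q$ and $P_n = R$; this is true immediately by the choice of the tuple. Next I would dispatch condition~(1), which requires that the collection $\{\overline{P_iP_{i+1}} \mid 1 \leq i < n\}$ partition $\overline{QR}$ modulo endpoint overlap. With $n = 2$ this collection consists of the single segment $\overline{QR}$, which trivially covers $\overline{QR}$ without any interior overlap.

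The only substantive condition is~(3): I must exhibit, for the unique partition piece $\overline{P_1P_2} = \overline{QR}$, an affine map $A_1$ that agrees with $A$ on every point of this piece. The natural choice is to simply set $A_1 \eqdef A$. Since $A$ is affine on $X$ by hypothesis and $\overline{QR} \subseteq X$, we have $A(x) = A_1(x)$ for all $x \in \overline{QR}$, so condition~(3) holds.

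There is no real obstacle here: the statement is essentially a sanity check confirming that \pref{def:exactline} admits the trivial partition whenever the function in question is already globally affine, and the three conditions are each checked in a single line. The only thing to be careful about is not to over-interpret condition~(1), i.e., to note explicitly that a single segment counts as a (degenerate) partition of itself.
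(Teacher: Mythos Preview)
Your proposal is correct and follows essentially the same approach as the paper: both verify that $\{\overline{QR}\}$ trivially partitions $\overline{QR}$ and then take the required affine map to be $A$ itself. The only difference is cosmetic---you address condition~(2) explicitly while the paper folds it into the partition check---but the substance is identical.
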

The following theorem (proved in~\pref{app:relu-exactline}) presents a method
of computing $\exactline{\mathrm{ReLU}}{QR}$.
\begin{restatable}{theorem}{ThmReluExactline}
    \label{thm:relu-exactline}
    Given a line segment $\overline{QR}$ in $d$ dimensions and a rectified
    linear layer $\mathrm{ReLU}(x) = (\mathrm{max}(x_1, 0), \ldots,
    \mathrm{max}(x_d, 0))$, the following is a suitable linear partitioning
    (\pref{def:exactline}):
    \begin{equation}
        \exactline{\mathrm{ReLU}}{QR} =
        \mathrm{sorted}
        \left(
            \left(
                \{ Q, R \}
                \cup
                \{ Q + \alpha(R - Q) \mid \alpha \in D \}
            \right)
            \cap
            \overline{QR}
        \right),
        \label{eq:alg}
    \end{equation}
    where $D = \left\{ -Q_i / (R_i - Q_i) \mid 1 \leq i \leq d \right\}$, $V_i$
    is the $i$th component of vector $V$, and $\mathrm{sorted}$ returns a tuple
    of the points sorted by distance from $Q$.
\end{restatable}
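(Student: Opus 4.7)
The plan is to verify the three conditions of Definition \ref{def:exactline} directly, exploiting the fact that $\mathrm{ReLU}$ is affine on each closed orthant of $\mathbb{R}^d$. Specifically, for any subset $S \subseteq \{1, \ldots, d\}$, on the region where $x_i \geq 0$ for $i \in S$ and $x_i \leq 0$ for $i \notin S$, $\mathrm{ReLU}$ coincides with the linear projection $\Pi_S$ that keeps coordinates in $S$ and zeros the rest. Hence it suffices to show that on each sub-segment $\overline{P_iP_{i+1}}$ of the proposed partition, no coordinate of $x(\alpha) = Q + \alpha(R - Q)$ changes sign in the interior, so that $\mathrm{ReLU}$ agrees with a single $\Pi_S$ throughout.

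First I would parameterize $\overline{QR}$ by $\alpha \in [0,1]$ via $x(\alpha) = Q + \alpha(R - Q)$. The $i$th coordinate $x_i(\alpha) = Q_i + \alpha(R_i - Q_i)$ is affine in $\alpha$: if $R_i = Q_i$ it is constant (fixed sign), and otherwise it vanishes exactly at $\alpha_i = -Q_i/(R_i - Q_i)$, which is the element of $D$ contributed by coordinate $i$. Conditions (1) and (2) of Definition \ref{def:exactline} are then immediate from the construction: $Q$ and $R$ are explicitly included, intersecting with $\overline{QR}$ restricts to $\alpha \in [0,1]$, and sorting by distance from $Q$ makes consecutive partition points have adjacent $\alpha$-values, so the closed sub-segments cover $\overline{QR}$ and overlap only at endpoints.

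For condition (3), I would argue that on each open sub-interval $(\alpha_{P_i}, \alpha_{P_{i+1}})$ the sign of every coordinate is constant: any sign change in coordinate $j$ would have to occur at $\alpha_j \in D \cap [0,1]$, but every such $\alpha_j$ already appears as a partition point and therefore cannot lie strictly between $\alpha_{P_i}$ and $\alpha_{P_{i+1}}$. Consequently there is a fixed $S$ for which $\mathrm{ReLU}(x(\alpha)) = \Pi_S(x(\alpha))$ on the open sub-interval, giving an affine map $A_i = \Pi_S$; by continuity of $\mathrm{ReLU}$ this same map agrees with $\mathrm{ReLU}$ on the closed sub-segment, completing (3).

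The main obstacle I expect is the careful bookkeeping of degenerate cases rather than any substantive difficulty: coordinates with $R_i = Q_i$ contribute no element to $D$ (so the definition should interpret such terms as absent); zero-crossings with $\alpha_j \notin [0,1]$ are discarded by the intersection with $\overline{QR}$; and coincident zero-crossings, or ones that coincide with $Q$ or $R$, produce duplicate partition points that collapse harmlessly under the sort. Spelling these cases out explicitly is what makes the covering and affineness arguments rigorous, but none of them change the underlying structure of the proof.
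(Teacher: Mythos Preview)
Your proposal is correct and follows essentially the same approach as the paper's proof: parameterize $\overline{QR}$ by $\alpha$, observe that $\mathrm{ReLU}$ is linear whenever the sign pattern of the coordinates is fixed, and identify the partition endpoints as the zero-crossings $\alpha_i = -Q_i/(R_i - Q_i)$ of each coordinate. Your treatment is in fact more thorough than the paper's in handling the degenerate cases ($R_i = Q_i$, out-of-range $\alpha_i$, and duplicate crossings) and in making the continuity extension from open to closed sub-segments explicit.
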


The essential insight is that we can ``follow'' the line until an orthant
boundary is reached, at which point a new linear region begins. To that end,
each number in $D$ represents a ratio between $Q$ and $R$ at which
$\overline{QR}$ crosses an orthant boundary.  Notably, $D$ actually computes
such ``crossing ratios'' for the \emph{unbounded} line $\overline{QR}$, hence
intersecting the generated endpoints with $\overline{QR}$ in~\pref{eq:alg}.

An analogous algorithm for MaxPool is presented
in~\pref{app:maxpool-exactline}; the intuition is to follow the line until the
maximum in any window changes. When a ReLU layer is followed by a MaxPool layer
(or vice-versa), the ``fused'' algorithm described
in~\pref{app:relumaxpool-exactline} can improve efficiency significantly.
More generally, the algorithm described in~\pref{app:pwl-exactline} can compute
$\problemname$ for any piecewise-linear function.

Finally, in practice we want to compute $\exactline{f}{QR}$ for entire
\emph{neural networks} (i.e. sequential compositions of layers), not just
individual layers (as we have demonstrated above). The next theorem shows that,
as long as one can compute $\exactline{L_i}{MN}$ for each individual layer
$L_i$ and arbitrary line segment $\overline{MN}$, then these algorithms can be
\emph{composed} to compute $\exactline{f}{QR}$ \emph{for the entire network}.
\begin{restatable}{theorem}{ThmMultiLayer}
    \label{thm:multi-layer}
    Given any piecewise-linear functions $f, g, h$ such that $f = h \circ g$
    along with a line segment $\overline{QR}$ where $g(R) \neq g(Q)$ and
    $\exactline{g}{QR} = (P_1, P_2, \ldots, P_n)$ is $\problemname$
    applied to $g$ over $\overline{QR}$, the following holds:
    \[ \exactline{f}{QR} =
        \mathrm{sorted}\left(
        \bigcup_{i=1}^{n - 1}{
            \left\{ P_i + \frac{y - g(P_i)}{g(P_{i+1}) - g(P_i)}\times (P_{i+1} - P_i) \mid
               y \in \exactline{h}{g(P_i)g(P_{i+1})}
            \right\}}
        \right)
    \]
    where $\mathrm{sorted}$ returns a tuple of the points sorted by distance
    from $Q$.
\end{restatable}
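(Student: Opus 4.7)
The plan is to verify the three defining conditions of \pref{def:exactline} for the proposed tuple. Let $(P_1, \ldots, P_n) = \exactline{g}{QR}$, and for each $i$ let $A_i^g$ be the affine map with $g(x) = A_i^g(x)$ on $\overline{P_iP_{i+1}}$. For each $i$, let $(y_1^{(i)}, \ldots, y_{m_i}^{(i)}) = \exactline{h}{g(P_i)g(P_{i+1})}$ with corresponding per-piece affine maps $A_j^{h,i}$. The candidate partition points on $\overline{P_iP_{i+1}}$ are the preimages $x_j^{(i)} \eqdef P_i + \alpha_j^{(i)}(P_{i+1} - P_i)$, where $\alpha_j^{(i)}$ is the scalar satisfying $y_j^{(i)} = g(P_i) + \alpha_j^{(i)}(g(P_{i+1}) - g(P_i))$. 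This $\alpha_j^{(i)}$ is well-defined whenever $g(P_i) \neq g(P_{i+1})$, and I will address the degenerate case separately.

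First I would check endpoint membership and partitioning. By definition of $\exactline{h}$, we have $y_1^{(i)} = g(P_i)$ and $y_{m_i}^{(i)} = g(P_{i+1})$, giving $\alpha_1^{(i)} = 0$ and $\alpha_{m_i}^{(i)} = 1$, so the preimages at the endpoints of $\overline{P_iP_{i+1}}$ reduce to $P_i$ and $P_{i+1}$. Thus consecutive pieces glue together correctly and, in particular, $P_1 = Q$ and $P_n = R$ survive as the global endpoints. Moreover, since an affine map restricted to a non-degenerate segment preserves the linear order, the $\alpha_j^{(i)}$ are monotonically increasing in $j$, so within each $\overline{P_iP_{i+1}}$ the pulled-back points lie in order; sorting the union by distance from $Q$ therefore produces a genuine partition of $\overline{QR}$.

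Second I would verify affineness on each sub-piece $\overline{x_j^{(i)} x_{j+1}^{(i)}}$. On this segment, $g$ agrees with $A_i^g$, and the image under $A_i^g$ lies in $\overline{y_j^{(i)} y_{j+1}^{(i)}}$ by the ratio-preserving property used to define $\alpha_j^{(i)}$; on that image segment $h$ agrees with $A_j^{h,i}$. Hence $f = h \circ g$ agrees with the composition of two affine maps, which is affine. Combined with the partitioning check, this gives all three conditions of \pref{def:exactline}.

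The main subtlety, and the step I would flag as the real obstacle, is the handling of indices $i$ with $g(P_i) = g(P_{i+1})$, where the expression $\frac{y - g(P_i)}{g(P_{i+1}) - g(P_i)}$ is nominally undefined. Here the affine map $A_i^g$ is constant on $\overline{P_iP_{i+1}}$, so $f = h \circ g$ is also constant on that segment and hence trivially affine; the hypothesis $g(Q) \neq g(R)$ ensures this degeneracy is not global, so at least one non-degenerate piece exists and the formula is well-posed on those pieces. The cleanest way to present this is to interpret the set-builder in the theorem statement as contributing just $\{P_i, P_{i+1}\}$ when the denominator vanishes, and to discharge the constant-on-piece case as a separate remark before running the argument above on the non-degenerate pieces.
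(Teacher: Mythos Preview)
Your proposal is correct and follows essentially the same approach as the paper's proof: pull back the $\problemname$ partition of $h$ through the affine action of $g$ on each piece $\overline{P_iP_{i+1}}$ using the ratio-preserving property, verify that the composition of the two affine maps is affine on each sub-piece, and treat the degenerate case $g(P_i)=g(P_{i+1})$ separately by observing that $f$ is constant there. If anything, your write-up is slightly more explicit than the paper's in checking that the endpoints glue correctly and that the pulled-back points are monotone within each piece.
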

The key insight is that we can first compute $\problemname$ for the first
layer, i.e. $\exactline{L_1}{QR} = (P^1_1, P^1_2, \ldots, P^1_n)$, then we can
continue computing $\problemname$ for the rest of the network \emph{within each
of the partitions $\overline{P^1_iP^1_{i+1}}$ individually}.

In~\pref{app:exactline-runtime} we show that, over arbitrarily many affine
layers, $l$ ReLU layers each with $d$ units, and $m$ MaxPool or MaxPool + ReLU
layers with $w$ windows each of size $s$, at most $\mathrm{O}((d + ws)^{l +
m})$ segments may be produced. If only ReLU and affine layers are used, at most
$\mathrm{O}(d^l)$ segments may be produced.
Notably, this is a significant improvement over the $\mathrm{O}((2^d)^l)$
upper-bound and $\mathrm{\Omega}(l\cdot(2^d))$ lower-bound
of~\citet{Xiang:arxiv2017}.  One major reason for our improvement is that we
particularly consider one-dimensional input \emph{lines} as opposed to
arbitrary polytopes. Lines represent a particularly efficient special case as
they are efficiently representable (by their endpoints) and, being
one-dimensional, are not subject to the combinatorial blow-up faced by
transforming larger input regions.  Furthermore, in practice, we have found
that the majority of ReLU nodes are ``stable'', and the actual number of
segments remains tractable; this algorithm for $\problemname$ often executes in
a matter of seconds for networks with over $60,000$ units (whereas the
algorithm of~\citet{Xiang:arxiv2017} runs in at least exponential
$\mathrm{O}(l\cdot(2^d))$ time regardless of the input region as it relies on
trivially considering \emph{all possible} orthants).

\section{Characterizing Decision Boundaries for ACAS Xu}
\label{sec:ACAS}

\begin{figure}
    \centering
    Legend:
    \textcolor{COCcolor}{\rule[.2\baselineskip]{1em}{2pt}}
    Clear-of-Conflict,
    \textcolor{WRcolor}{\rule[.2\baselineskip]{1em}{2pt}} Weak Right, 
    \textcolor{SRcolor}{\rule[.2\baselineskip]{1em}{2pt}} Strong Right, 
    \textcolor{SLcolor}{\rule[.2\baselineskip]{1em}{2pt}} Strong Left, 
    \textcolor{WLcolor}{\rule[.2\baselineskip]{1em}{2pt}} Weak Left.
    \\
    \vspace{1em}
    \begin{subfigure}{.3\linewidth}
        \includegraphics[width=\linewidth]{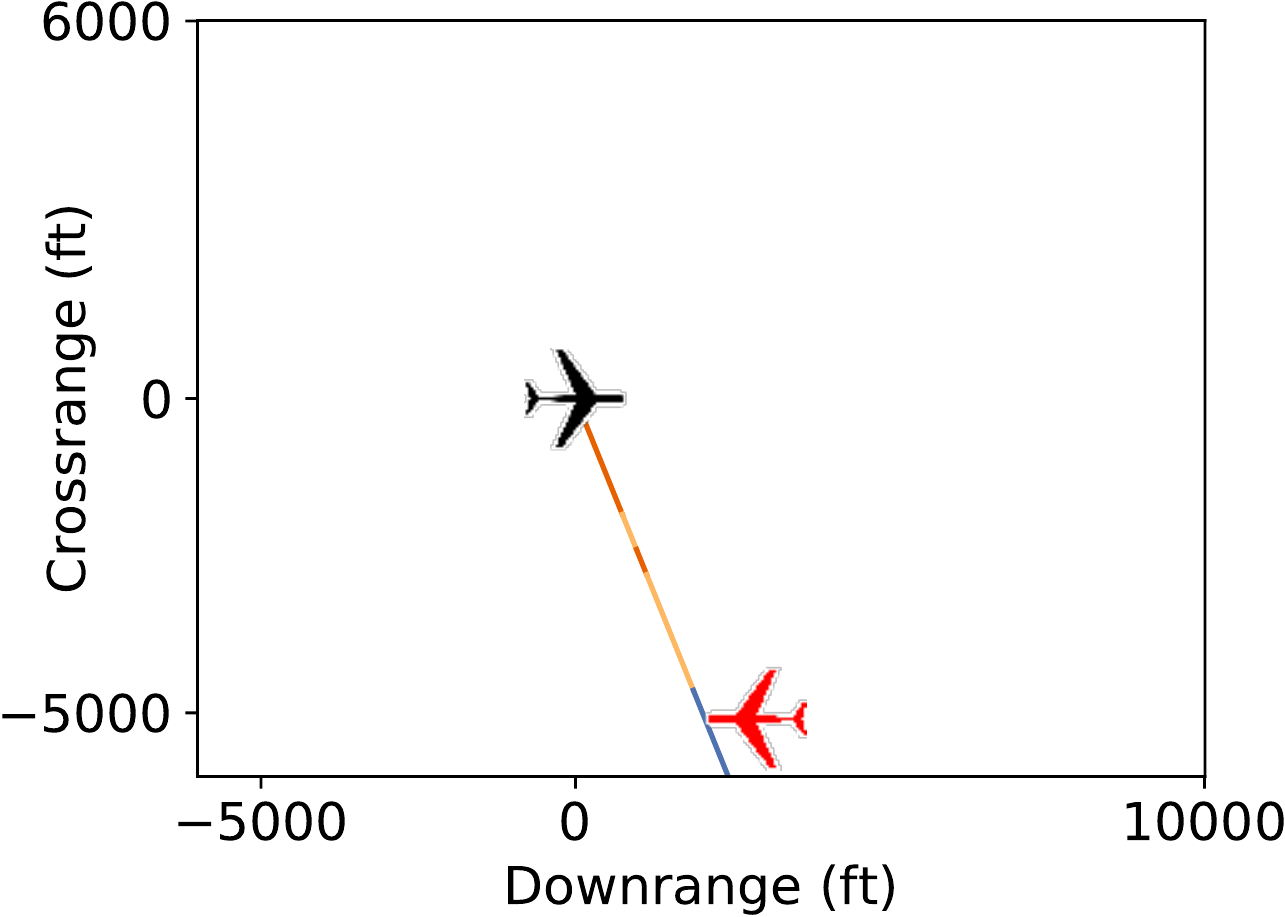}
        \caption{Single line varying $\rho$}
        \label{fig:acas-single-distance}
    \end{subfigure}
    \begin{subfigure}{.3\linewidth}
        \includegraphics[width=\linewidth]{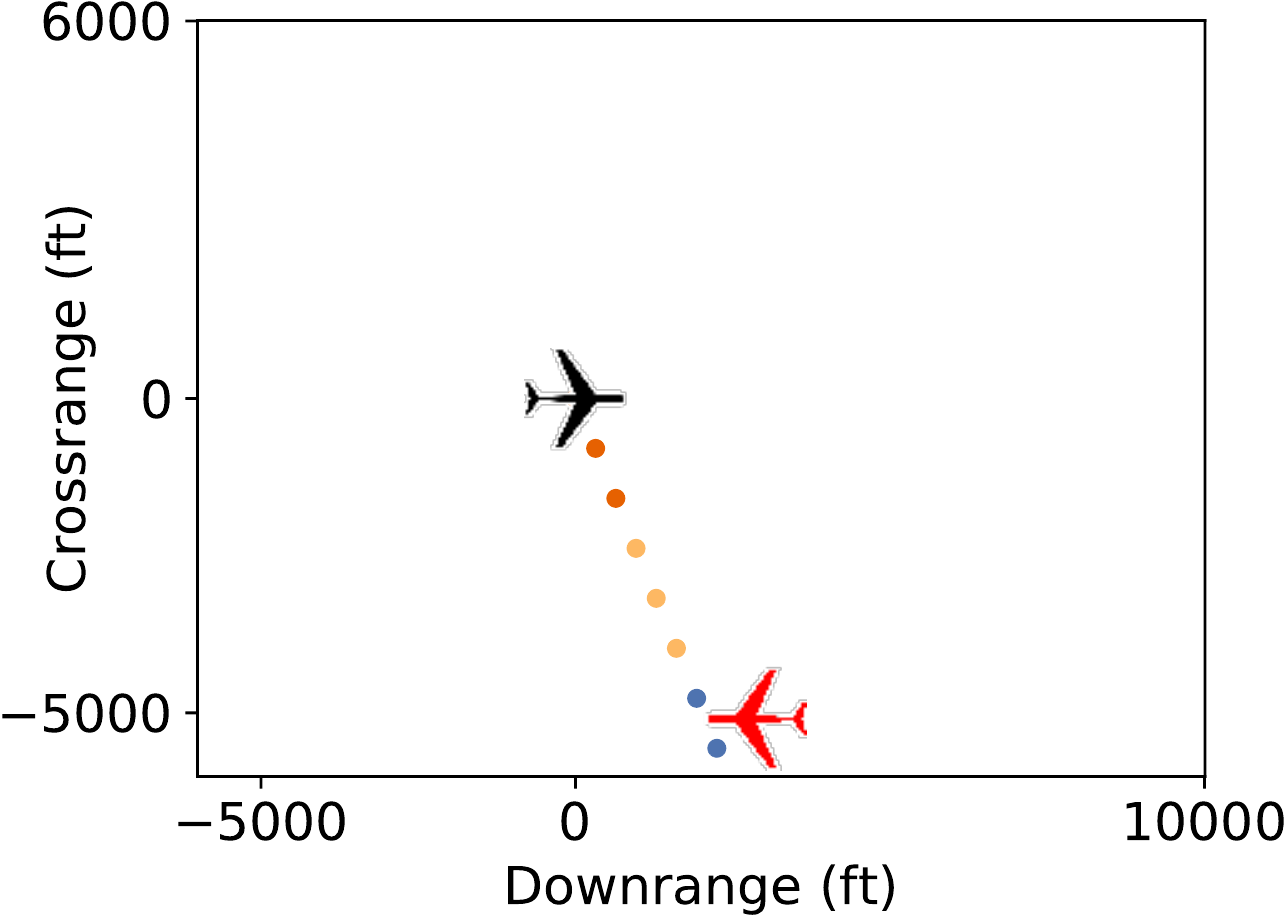}
        \caption{Sampling different $\rho$s}
        \label{fig:acas-single-distance-sample}
    \end{subfigure}
    \begin{subfigure}{.3\linewidth}
        \includegraphics[width=\linewidth]{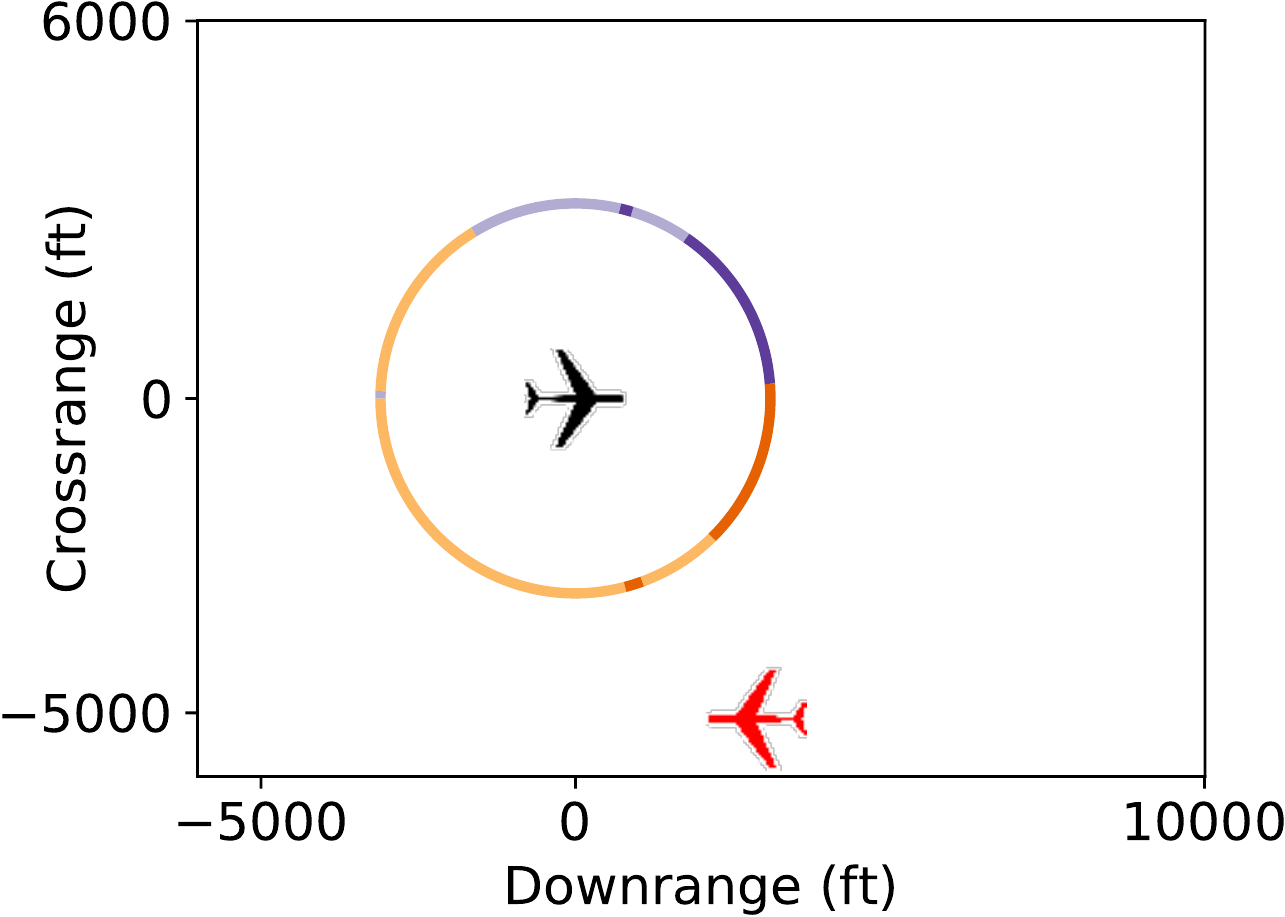}
        \caption{Single line varying $\theta$}
        \label{fig:acas-single-theta}
    \end{subfigure}
    \begin{subfigure}{.3\linewidth}
        \includegraphics[width=\linewidth]{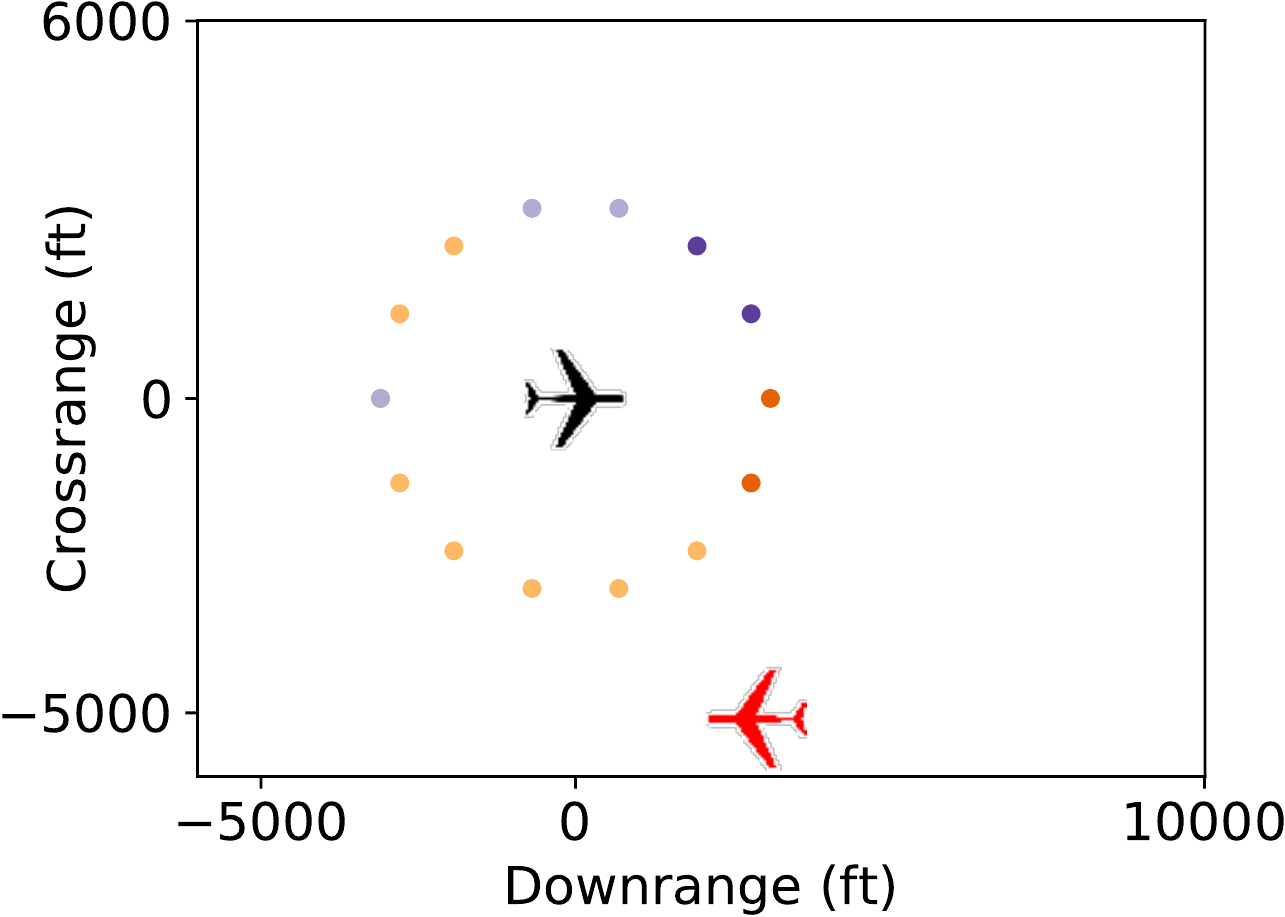}
        \caption{Sampling different $\theta$s}
        \label{fig:acas-single-theta-sample}
    \end{subfigure}
    \begin{subfigure}{.3\linewidth}
        \includegraphics[width=\linewidth]{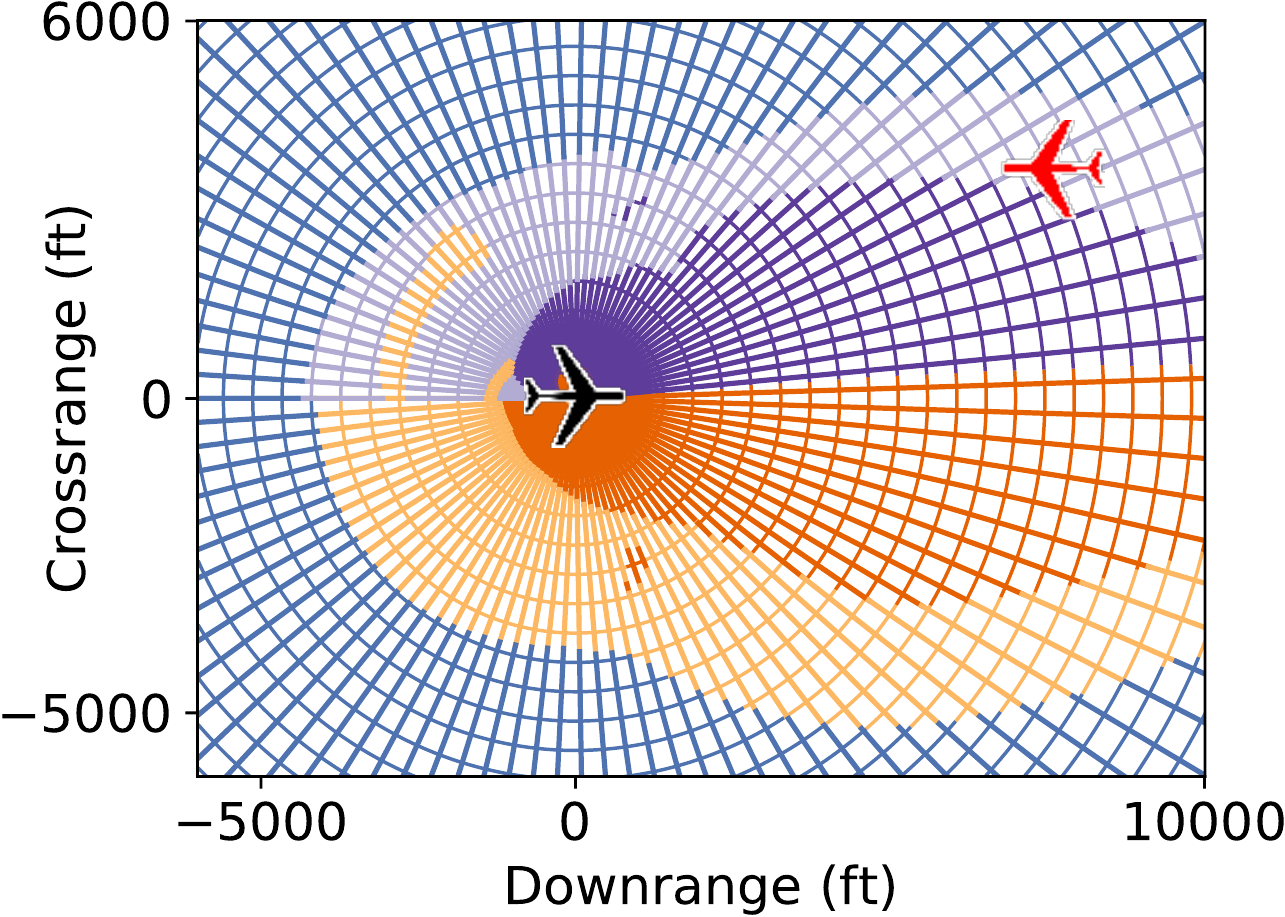}
        \caption{Combination of lines varying $\rho$ and lines varying
        $\theta$}
        \label{fig:acas-overlapping}
    \end{subfigure}
    \begin{subfigure}{.3\linewidth}
        \includegraphics[width=\linewidth]{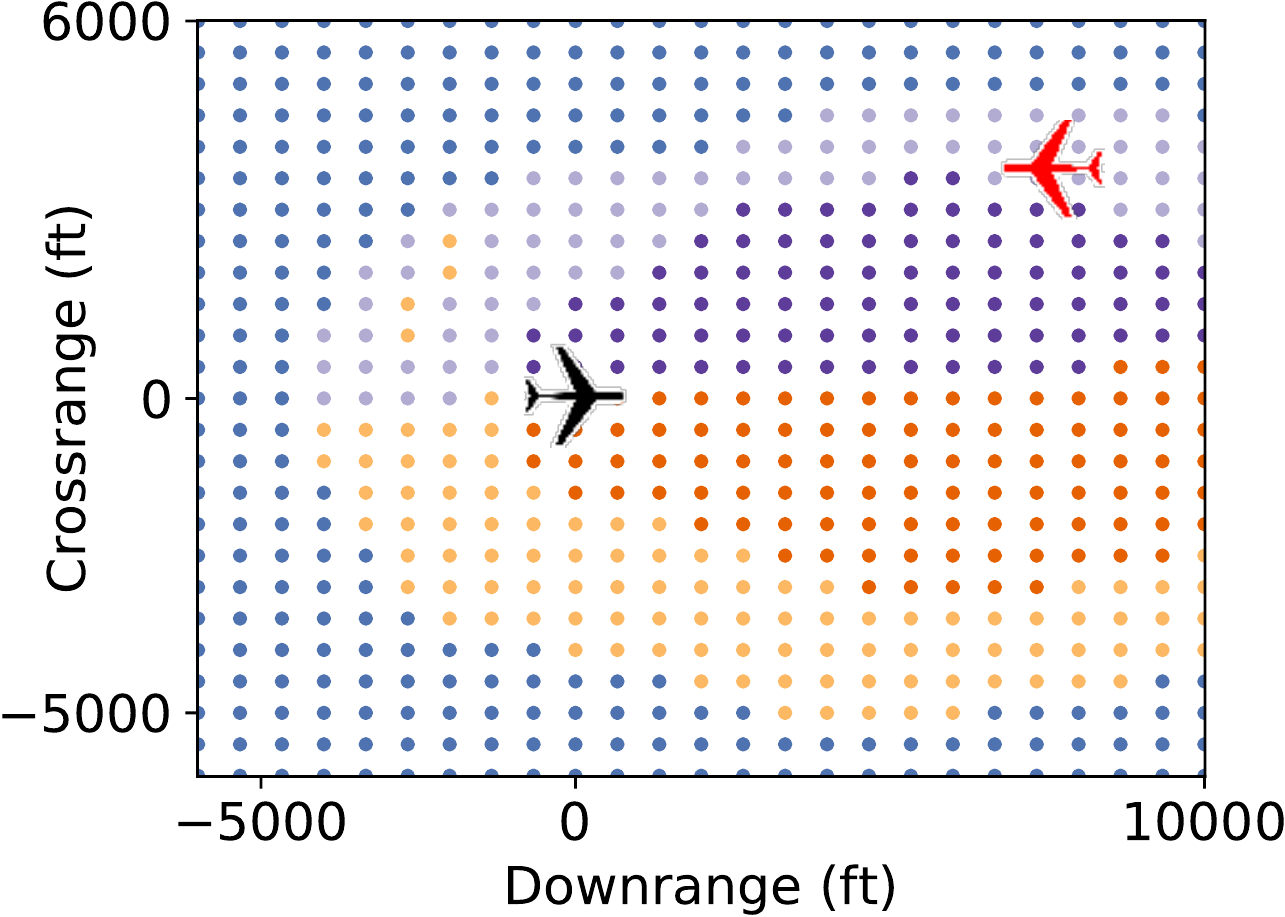}
        \caption{Sampling finitely many points}
        \label{fig:acas-sample}
    \end{subfigure}
    \caption{(a)--(d) Understanding the decision boundary of an ACAS Xu
    aircraft avoidance network along individual lines using $\problemname$
    ((a), (c)) and finite sampling ((b), (d)). In the $\problemname$
    visualizations there is a clear region of ``strong left'' in a region that
    is otherwise ``weak left'' that does not show in the sampling plots due to
    the sampling density chosen. In practice, it is not clear what sampling
    density to choose, thus the resulting plots can be inaccurate and/or
    misleading. (e)--(f) Computing the decision boundaries among multiple lines
    and plotting on the same graph. Using $\problemname$ to sample
    \emph{infinitely many} points provides more confidence in the
    interpretation of the decision boundaries. Compare to similar figures
    in~\citet{julian2018deep,reluplex:CAV2017}.
    }
    \label{fig:acas-1}
\end{figure}

The first application of $\problemname$ we consider is that of understanding
the decision boundaries of a neural network over some infinite set of inputs.
As a motivating example, we consider the ACAS Xu network trained by
\citet{julian2018deep} to determine what action an aircraft (the ``ownship'')
should take in order to avoid a collision with an intruder. After training such
a network, one usually wishes to probe and visualize the recommendations of the
network. This is desirable, for example, to determine at what distance from the
ownship an intruder causes the system to suggest a strong change in heading, or
to ensure that distance is roughly the same regardless of which side the
intruder approaches.

The simplest approach, shown in \pref{fig:acas-sample} and currently the
standard in prior work, is to consider a (finite) set of possible input
situations (samples) and see how the network reacts to each of them. This can
help one get an overall idea of how the network behaves.  For example, in
\pref{fig:acas-sample}, we can see that the network has a mostly symmetric
output, usually advising the plane to turn away from the intruder when
sufficiently close.
Although sampling in this way gives human viewers an intuitive and meaningful
way of understanding the network's behavior, it is severely limited because it
relies on sampling \textit{finitely many points} from a (practically)
\textit{infinite input space}. Thus, there is a significant chance that some
interesting or dangerous behavior of the network may be exposed with more
samples.

By contrast, the $\problemname$ primitive can be used to \textit{exactly}
determine the output of the network at \textit{all} of the \emph{infinitely
many} points on a line in the input region. For example,
in~\pref{fig:acas-single-distance}, we have used $\problemname$ to visualize a
particular head-on collision scenario where we vary the distance of the
intruder (specified in polar coordinates $(\rho, \theta)$) with respect to the
ownship (always at $(0, 0)$).  Notably, there is a region of ``Strong Left'' in
a region of the line that is otherwise entirely ``Weak Left`` that shows up
in~\pref{fig:acas-single-distance} (the $\problemname$ method) but not
in~\pref{fig:acas-single-distance-sample} (the sampling method). We can do this
for lines varying the $\theta$ parameter instead of $\rho$, result
in~\pref{fig:acas-single-theta} and~\pref{fig:acas-single-theta-sample}.
Finally, repeating this process for many lines and overlapping them on the same
graph produces a detailed ``grid'' as shown in~\pref{fig:acas-overlapping}.

\pref{fig:acas-overlapping} also shows a number of interesting and potentially
dangerous behaviors. For example, there is a significant region behind the plane
where an intruder on the left may cause the ownship to make a weak left turn
\textit{towards} the intruder, an unexpected and asymmetric behavior.
Furthermore, there are clear regions of strong left/right where the network
otherwise advises weak left/right. Meanwhile, in~\pref{fig:acas-sample}, we see
that the sampling density used is too low to notice the majority of this
behavior. In practice, it is not clear what sampling density should be taken to
ensure all potentially-dangerous behaviors are caught, which is unacceptable for safety-critical
systems such as aircraft collision avoidance.

\noindent\textbf{Takeaways.}  $\problemname$ can be used to visualize the network's output
on \emph{infinite subsets of the input space}, significantly improving the
confidence one can have in the resulting visualization and in the safety and
accuracy of the model being visualized.

\noindent\textbf{Future Work.} One particular area of future work in this direction is
using $\problemname$ to assist in network verification tools such
as~\citet{reluplex:CAV2017} and~\citet{ai2:SP2018}. For example, the
relatively-fast $\problemname$ could be used to check infinite subsets of the
input space for counter-examples (which can then be returned immediately)
before calling the more-expensive complete verification tools.

\section{Exact Computation of Integrated Gradients}
\label{sec:IntegratedGradients}

Integrated Gradients (IG)~\cite{Sundararajan:ICML2017} is a method of
attributing the prediction of a DNN to its input features.  Suppose function $F
: \mathbb{R}^n \to [0,1]$ defines the network. The \emph{integrated gradient}
along the $i^{th}$ dimension for an input $x= (x_1,\ldots, x_n) \in
\mathbb{R}^n$ and baseline $x' \in \mathbb{R}^n$ is defined as:
\begin{equation}
    \small
    IG_i(x)  \eqdef  (x_i - x_i') \times \int_{\alpha=0}^{1}\frac{ \partial F(x' + \alpha \times (x-x') )}{\partial x_i}d\alpha 
    \label{eq:IG}
\end{equation}
Thus, the integrated gradient along all input dimensions is the integral of the
gradient computed on all points on the straightline path from the baseline $x'$
to the input $x$. In prior work it was not known how to exactly compute this
integral for complex networks, so it was approximated using the left-Riemann
summation of the gradients at $m$ uniformly sampled points along the
straightline path from $x'$ to $x$:
\begin{equation}
    \small
\IGApprox_i^m \eqdef (x_i - x_i') \times \sum_{0 \leq k < m} \frac{ \partial F(x' + \frac{k}{m} \times (x-x') )}{\partial x_i}\times \frac{1}{m} 
\label{eq:IGApprox}
\end{equation}
The number of samples $m$ determines the quality of this approximation. Let
$\widetilde{m}$ denote the number of samples that is large enough to ensure
that $\sum_{i=1}^{n} \IGApprox_i^{\widetilde{m}} \approx F(x) - F(x')$. This is
the recommended number of samples suggested by \citet{Sundararajan:ICML2017}.
In practice, $\widetilde{m}$ can range between 20 and 1000
\cite{Sundararajan:ICML2017,Preuer:Corr2019}.

While the (exact) IG described by \pref{eq:IG} satisfies properties such as
completeness~\cite[\S3]{Sundararajan:ICML2017} and
sensitivity~\cite[\S2.1]{Sundararajan:ICML2017}, the approximation computed in
practice using \pref{eq:IGApprox} need not.

\begin{figure}[t] \centering
    \includegraphics[width=.6\linewidth]{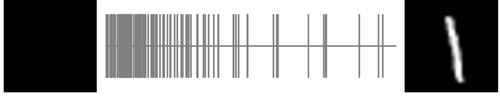}
    \caption{``Integrated Gradients'' is a powerful method of neural network
    attribution.  IG relies on computing the integral of the gradients of the
    network at all points linearly interpolated between two images (as shown
    above), however previous work has only been able to \emph{approximate} the
    true IG, casting uncertainty on the results. Within each partition
    identified by $\problemname$ (delineated by vertical lines) the gradient is
    constant, so computing the \emph{exact} IG is possible for the first time.}
    \label{fig:igrad1}
\end{figure}

The integral in \pref{eq:IG} can be computed exactly by adding an additional
condition to the definition of $\problemname$: that the gradient of the network
within each partition is constant. It turns out that this stricter definition
is met by all of the algorithms we have discussed so far, a fact we discuss in
more detail in~\pref{app:constgrad}. For ReLU layers, for example, because the
network acts like a single affine map within each orthant, and we split the
line such that each partition is entirely contained within an orthant, the
network's gradient is constant within each orthant (and thus along each
$\problemname$ partition). This is stated formally by~\pref{thm:relu-constgrad}
and proved in~\pref{app:constgrad}:

\begin{restatable}{theorem}{ThmReluConstgrad}
    \label{thm:relu-constgrad}
    For any network $f$ with nonlinearities introduced only by ReLU functions
    and $\exactline{f}{QR} = (P_1, P_2, \ldots, P_n)$ computed according
    to~\pref{eq:alg}, the gradient of $f$ with respect to its input vector $x$,
    i.e. $\nabla f(x)$, is constant within each linear partition
    $\overline{P_iP_{i+1}}$.
\end{restatable}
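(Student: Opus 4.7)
The plan is to induct on the structure of $f$ as a composition of affine and ReLU layers, handling the composition step via~\pref{thm:multi-layer} and treating each single layer directly for the base case.

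For the base case, I would first dispatch a single affine layer trivially using~\pref{thm:affine-exactline}: the only partition is $(Q,R)$ and affine maps have constant Jacobian everywhere. For a single ReLU layer, I would use~\pref{thm:relu-exactline}, which subdivides $\overline{QR}$ at exactly the ratios $\alpha = -Q_i/(R_i-Q_i)$ at which some input coordinate $x_i$ crosses zero. The key consequence is that on the interior of any resulting partition every coordinate has a fixed strict sign, so ReLU's Jacobian — a diagonal matrix with entries in $\{0,1\}$ determined by these signs — is the same constant matrix, and moreover this remains true on a full-dimensional open neighborhood of each interior point, not just along the line.

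For the inductive step, I would decompose $f = h \circ g$ with $g$ the first layer and $h$ the remainder, and invoke~\pref{thm:multi-layer} to see that every final partition $\overline{S_jS_{j+1}}$ lies inside some $g$-partition $\overline{P_iP_{i+1}}$ and maps under $g$ into a single sub-partition of $\exactline{h}{g(P_i)g(P_{i+1})}$. The base case applied to $g$ then supplies a constant Jacobian $J$ on a neighborhood of the interior of $\overline{S_jS_{j+1}}$, while the inductive hypothesis applied to $h$ supplies a constant Jacobian $K$ on a neighborhood of the image. The chain rule yields $\nabla f = K\cdot J$, which is constant throughout the interior of $\overline{S_jS_{j+1}}$.

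The main obstacle I anticipate is making the differentiability argument rigorous: ReLU is non-differentiable on orthant boundaries, and the composed network is non-differentiable wherever any intermediate pre-activation hits zero. To apply the chain rule I need $\nabla g$ and $\nabla h$ to exist on a full-dimensional neighborhood of each interior point of a final partition, not merely along the line. Establishing this requires verifying that~\pref{eq:alg} combined with~\pref{thm:multi-layer} captures \emph{every} zero-crossing of \emph{every} pre-activation throughout the network, so that in the interior of each final partition every pre-activation at every layer is strictly nonzero. The one-dimensional setting is what makes this tractable: along a line, each pre-activation is an affine scalar function whose zero set is either a single isolated point or the entire line, and the partitioning algorithm is designed precisely to cut at those isolated points.
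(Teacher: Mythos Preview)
Your proposal is correct and arrives at the same conclusion the paper does, but the route differs in presentation and level of rigor. The paper's proof is a three-sentence sketch: it observes that the gradient of a ReLU network, where defined, is determined entirely by the sign pattern of all intermediate pre-activations, and then notes that the algorithm in~\pref{eq:alg} (propagated via~\pref{thm:multi-layer}) is precisely constructed so that those sign patterns are constant on each final partition. There is no explicit induction, no chain-rule bookkeeping, and no separate treatment of neighborhoods.

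Your version formalizes the same mechanism by inducting on the layer decomposition and invoking the chain rule at each step, and in particular you make explicit the point the paper only acknowledges \emph{after} its proof: that constancy of $\nabla f$ requires the affine description to hold on a full-dimensional neighborhood, not just along $\overline{QR}$. Your observation that each pre-activation along the line is a scalar affine function (hence has either an isolated zero or is identically zero) is exactly what makes the neighborhood claim go through, and it is the one substantive piece of reasoning the paper leaves implicit. So your argument buys more rigor at the cost of more machinery; the paper's buys brevity by appealing directly to the global ``sign pattern determines Jacobian'' fact for ReLU networks.
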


This allows us to exactly compute the IG of each individual partition $r$
($RIG^r_i$) by finding the gradient at any point in that partition and
multiplying by the width of the partition:
\begin{equation}
    \small
    RIG^{r}_i(x) \eqdef (P_{(r + 1)i} - P_{ri}) \times
    \frac{\partial F(0.5\times (P_{ri} + P_{(r+1)i}))}{\partial x_i}
    \label{eq:RIG}
\end{equation}
Compared to \pref{eq:IG}, \pref{eq:RIG} computes the IG for partition for
$\overline{P_rP_{r+1}}$ by replacing the integral with a single term
(arbitrarily choosing $\alpha = 0.5$, i.e. the midpoint) because the gradient
is uniform along $\overline{P_rP_{r+1}}$.
The exact IG of $\overline{x'x}$ is the sum of the IGs of each partition:
\begin{equation} \small
IG_i(x) = \sum_{r = 1}^{n} RIG^{r}_i(x)
\label{eq:exactIG}
\end{equation}

\begin{table}[t]
    \begin{minipage}{.33\linewidth}\centering
        \caption{Mean relative error for approximate IG (using $\widetilde m$) 
        compared to exact IG (using $\problemname$)  on CIFAR10. The approximation error is
        surprisingly high.}
        \label{tab:ig-error}
        \small
    \begin{tabular}{@{}ll@{}} \toprule 
               & Error (\%) \\ \midrule 
    convsmall  & 24.95      \\
    convmedium & 24.05      \\
    convbig    & 45.34      \\ \bottomrule
    \end{tabular}
    \end{minipage}
    \hfill
    \begin{minipage}{.6\linewidth}\centering
        \caption{Average number of samples needed by different IG approximations
        to reach $5\%$ relative error w.r.t.\ exact IG (using $\problemname$). 
        Outliers requiring over $1,000$ samples were not considered. Using
        trapezoidal rule instead of a left-sum can cause large gains in
        accuracy and performance.}
        \label{tab:ig-samples}
        \small
    \begin{tabular}{@{}lllll@{}} \toprule
        & Exact & \multicolumn{3}{c}{ Approximate} \\
        \cmidrule{3-5}
               &          & left   & right  & trap.  \\ \midrule
    convsmall  & 2582.74  & 136.74 & 139.40 & 91.67  \\
    convmedium & 3578.89  & 150.31 & 147.59 & 91.57  \\
    convbig    & 14064.65 & 269.43 & 278.20 & 222.79 \\ \bottomrule
    \end{tabular}
    \end{minipage}
\end{table}
    
\noindent\textbf{Empirical Results.}
A prime interest of ours was to determine
the accuracy of the existing sampling method.
On three different CIFAR10 networks \cite{ERAN}, we took each image in
the test set and computed the exact IG against a black baseline using
\pref{eq:exactIG}. We then found $\widetilde{m}$ and computed the mean relative
error between the exact IG and the approximate one. As shown in
\pref{tab:ig-error}, the approximate IG has an error of $25-45\%$. This is a
concerning result, indicating that the existing use of IG in practice may be
misleading. Notably, without $\problemname$, there would be no way of realizing
this issue, as this analysis relies on computing the exact IG to compare with.

For many smaller networks considered, we have found that computing the exact IG
is relatively fast (i.e., at most a few seconds) and would recommend doing so
for situations where it is feasible. However, in some cases the number of
gradients that would have to be taken to compute exact IG is high (see Column
2, \pref{tab:ig-samples}). In such cases, we can use our exact computation to
understand how many uniform samples one should use to compute the approximate
IG to within $5\%$ of the exact IG. To that end, we performed a second
experiment by increasing the number of samples taken until the approximate IG
reached within $5\%$ error of the exact IG. In practice, we
found there were a number of ``lucky guess'' situations where taking, for
example, exactly $5$ samples led to very high accuracy, but taking $4$ or $6$
resulted in very poor accuracy. To minimize the impact of such outliers, we
also require that the result is within $5\%$ relative error when taking up to
$5$ more samples. In \pref{tab:ig-samples} the results of this experiment are
shown under the column ``left,'' where it becomes clear that relatively few
samples (compared to the number of actual linear regions under ``exact'') are
actually needed.
Thus, one can use
$\problemname$ on a test set to understand how many samples are
needed on average to get within a desired accuracy, then use that many samples
when approximating IG.

Finally, the left Riemann sum used by existing work in IG~\cite{Sundararajan:ICML2017} is only one of many possible sampling
methods; one could also use a right Riemann sum or the ``Trapezoidal Rule.''
 With $\problemname$ we can
compute the exact IG and quantify how much better or worse each approximation
method is.
To that end, the columns ``right'' and ``trap.'' in \pref{tab:ig-samples} show
the number of samples one must take to get consistently within $5\%$ of the
exact integrated gradients. Our results show the number of samples needed to
accurately approximate IG with trapezoidal rule is significantly less than
using left and right sums. Note that, because these networks are
piecewise-linear functions, it is possible for trapezoidal sampling to be worse
than left or right sampling, and in fact for all tested models there were
images where trapezoidal sampling was less accurate than left and right
sampling. This is why having the ability to compute the \emph{exact} IG is
important, to ensure that we can empirically quantify the error
involved in and justify choices between different heuristics.  Furthermore, we
find a per-image reduction in the number of samples needed of $20-40\%$ on
average. Thus, we also recommend that users of IG use a trapezoidal
approximation instead of the currently-standard left sum.

\noindent\textbf{Takeaways.} $\problemname$ is the first method for exactly computing
integrated gradients, as all other uses of the technique have sampled according
to \pref{eq:IGApprox}. From our results, we provide two suggestions to
practitioners looking to use IG: (1) Use $\problemname$ on a test set before
deployment to better understand how the number of samples taken relates to
approximation error, then choose sampling densities at runtime based on those
statistics and the desired accuracy; (2) Use the trapezoidal rule approximation
when approximating to get better accuracy with
fewer samples.

\noindent\textbf{Future Work.} 
$\problemname$ can be used to design new IG approximation using non-uniform
sampling. $\problemname$ can similarly be used to exactly compute other measures
involving path integrals such as neuron conductance~\citep{Dhamdhere:Arxiv2018},
a refinement of integrated gradients.

\section{Understanding Adversarial Examples}
\label{sec:CountingRegions}

\begin{figure}[t]\centering
    \includegraphics[width=.9\linewidth]{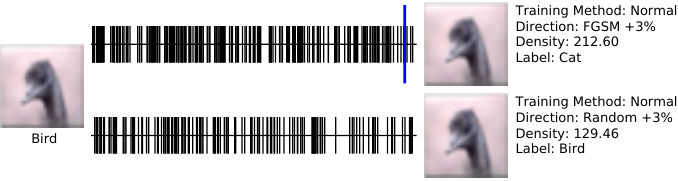}
    \caption{Comparing the density of linear partitions from a test image to
    FGSM and random baselines. The long blue line segment indicates the change
    in classification. (1) Even within a small perturbation of the input point,
    there is \emph{significant} non-linear behavior in the network. (2)
    Although the linear hypothesis for adversarial examples predicts that both
    normal inputs and their corresponding adversarial examples (``FGSM + 3\%'')
    will lie on the same linear region, we find in practice that, not only do
    they lie on different linear regions, but there is significantly
    \emph{more} non-linearity in the adversarial (FGSM) direction than a random
    direction.  This falsifies the fundamental assumption behind the linear
    explanation of adversarial examples.}
    \label{fig:count_samenet}
\end{figure}

\begin{figure}[t]\centering
    \includegraphics[width=.9\linewidth]{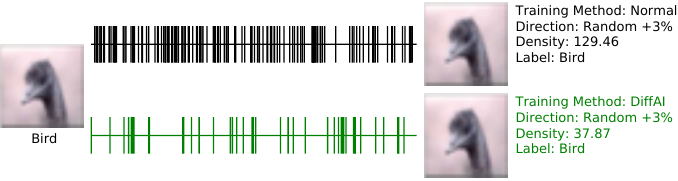}
    \caption{Comparing the density of linear partitions from a test image to
    random baselines for normal (in black) and DiffAI-protected networks (in
    {\color{DiffAIGreen}green}). Networks trained with the DiffAI robust-training
    scheme tend to exhibit significantly fewer non-linearities than their
    normally-trained counterparts.}
    \label{fig:count_diffnets}
\end{figure}

We use $\problemname$ to empirically investigate and falsify a
fundamental assumption behind a well-known hypothesis for the existence of adversarial
examples. In the process, using
$\problemname$, we discover a strong association between robustness and the
empirical \emph{linearity} of a network, which we believe may help spur future
work in understanding the source of adversarial examples.

\noindent\textbf{Empirically Testing the Linear Explanation of Adversarial Examples.}
We consider in this section one of the dominant hypotheses for the problem of
adversarial examples
(\citep{Szegedy:ICLR2014,Goodfellow:ICLR2015,DeepFool:CVPR2016}), first
proposed by~\citet{Goodfellow:ICLR2015} and termed the ``Linear Explanation of
Adversarial Examples.'' It makes a fundamental assumption that, when restricted
to the region around a particular input point, the output of the neural network
network is \emph{linear}, i.e. the tangent plane to the network's output at
that point exactly matches the network's true output.  Starting from this
assumption,~\citet{Goodfellow:ICLR2015} makes a theoretical argument concluding
that the phenomenon of adversarial examples results from the network being
``too linear.''
Although theoretical debate has ensued as to the merits of this
argument~\citep{tanay2016boundary}, the effectiveness of adversarials generated
according to this hypothesis (i.e., using the standard ``fast gradient sign
method'' or ``FGSM'') has sustained it as one of the most well-subscribed-to
theories as to the source of adversarial examples.  However, to our knowledge,
the fundamental assumption of the theory---that real images and their
adversarial counterparts lie on the same linear region of a neural
network---has not been rigorously validated empirically.

\setlength{\tabcolsep}{2pt}
\begin{table}[t] \centering
\caption{
    Density of FGSM line partitions divided by density of Random line
    partitions. FGSM directions tend to be significantly more dense
    than random directions, contradicting the well-known Linear
    Explanation of Adversarial Examples.
    {\small Mdn: Median, 25\%: 25\% percentile, 75\%: 75\% percentile } }
\begin{subtable}{.45\linewidth}  \centering
    \caption{MNIST}
    \label{tab:mnist-fgsm-hypothesis}
    {\small
\begin{tabular}{@{}llll@{}} \toprule
     & \multicolumn{3}{c}{FGSM/Random} \\
       \cmidrule{2-4}
    Training Method &  Mdn  & 25\% & 75\% \\ \midrule
    Normal          &  1.36 & 0.99 & 1.76 \\
    DiffAI          &  0.98 & 0.92 & 1.38 \\
    PGD             &  1.22 & 0.97 & 1.51 \\ \bottomrule
\end{tabular}
}%
\end{subtable}
\begin{subtable}{.45\linewidth}  \centering
    \caption{CIFAR10}
    \label{tab:cifar-fgsm-hypothesis}
    { \small
\begin{tabular}{@{}llll@{}} \toprule
     & \multicolumn{3}{c}{FGSM/Random} \\
       \cmidrule{2-4}
    Training Method & Mdn  & 25\% & 75\% \\ \midrule
    Normal          & 1.78 & 1.60 & 2.02 \\
    DiffAI          & 1.67 & 1.47 & 2.03 \\
    PGD             & 1.84 & 1.65 & 2.10 \\ \bottomrule
\end{tabular}
    }%
\end{subtable}
\label{tab:fgsm-hypothesis}
\vspace{-3ex}
\end{table}

To empirically validate this hypothesis, we looked
at the line between an image and its FGSM-perturbed counterpart (which is
classified differently by the network), then used $\problemname$ to quantify the
\emph{density} of each line (the number of linear partitions --- each delineated
by tick marks in~\pref{fig:count_samenet} --- divided by the distance between
the endpoints). If the underlying linearity assumption holds, we would expect
that both the real image and the perturbed image lie on the same linear
partition, thus we would not expect to see the line between them pass through
any other linear partitions.  In fact (top of~\pref{fig:count_samenet}), we find
that the adversarial image seems to lie across many linear partitions, directly
contradicting the fundamental assumption of~\citet{Goodfellow:ICLR2015}.
We also compared this line to the line between the real image and the real
image \emph{randomly} perturbed by the same amount (bottom
of~\pref{fig:count_samenet}). As shown in
\pref{tab:fgsm-hypothesis}, the FGSM direction seems to pass through
significantly \emph{more} linear partitions than a randomly chosen direction.
This result shows that, not only is the linearity assumption not met, but in
fact the opposite appears to be true: adversarial examples are associated with
\emph{unusually non-linear} directions of the network.

With these results in mind, we realized that the linearity assumption as
initially presented in~\citet{Goodfellow:ICLR2015} is stronger than necessary;
it need only be the case that the gradient is \emph{reasonably} constant across
the line, so that the tangent plane approximation used is still reasonably
accurate. To measure how well the tangent plane approximates the function
between normal and adversarial inputs, we compared the gradient taken at the
real image (i.e., used by FGSM) to the gradients at each of the intermediate
linear partitions, finding the mean error between each and averaging weighted
by size of the partition. If this number is near $0$, it implies that, although
many theoretically distinct linear partitions exist between the two points,
they have roughly the same ``slope'' and thus the tangent plane approximation
would be accurate and the Linearity Hypothesis may still be a worthwhile
explanation of the phenomenon of adversarial examples.  However, we find that
this number is larger than $250\%$ when averaged over all images tested,
implying that the tangent plane is \emph{not} a particularly good approximation
of the underlying function in these regions of input space and providing
further empirical evidence against the Linear Explanation of adversarial
examples.

\begin{table}[t] \centering
    \caption{
        Comparing density of lines when different training algorithms (normal,
        DiffAI, and PGD) are used. We report the mean of those ratios across
        all tested lines.  These results indicate that networks trained to be
        adversarially robust with DiffAI or PGD training methods tend to behave
        more linearly than non-robust models.
        {\small Mdn: Median, 25\%: 25\% percentile, 75\%: 75\% percentile } }
\begin{subtable}{.45\linewidth} \centering
\caption{MNIST}
\label{tab:mnist-robustness-hypothesis}
{\small
\begin{tabular}{@{}llllclll@{}} \toprule
    & \multicolumn{3}{c}{Normal/DiffAI} & \phantom{a} & \multicolumn{3}{c}{Normal/PGD}\\
      \cmidrule{2-4} \cmidrule{6-8} 
    Dir. &  Mdn & 25\% & 75\%  &&  Mdn  & 25\% & 75\% \\ \midrule 
    FGSM & 3.05 & 2.05 & 4.11  &&  0.88 & 0.66 & 1.14 \\
    Rand.& 2.50 & 1.67 & 3.00  &&  0.80 & 0.62 & 1.00 \\ \bottomrule
\end{tabular}
} %
\end{subtable}
\begin{subtable}{.45\linewidth} \centering
    \caption{CIFAR10}
    \label{tab:cifar-robustness-hypothesis}
    {\small
    \begin{tabular}{@{}llllclll@{}} \toprule
        & \multicolumn{3}{c}{Normal/DiffAI} & \phantom{a} & \multicolumn{3}{c}{Normal/PGD}\\
          \cmidrule{2-4} \cmidrule{6-8} 
        Dir.  & Mdn  & 25\% & 75\%  &&  Mdn   & 25\%  & 75\%\\ \midrule 
        FGSM  & 3.37 & 2.77 & 4.94  &&  1.48 & 1.22 & 1.67 \\
        Rand. & 3.43 & 2.78 & 4.42  &&  1.51 & 1.21 & 1.80 \\ \bottomrule
    \end{tabular}
    } %
\end{subtable}
\label{tab:robustness-hypothesis}
\end{table}
\setlength{\tabcolsep}{6pt}

\noindent\textbf{Characteristics of Adversarially-Trained Networks.}
We also noticed an unexpected trend in the previous experiment: networks
trained to be robust to adversarial perturbations (particularly DiffAI-trained
networks~\citep{diffai2018}) seemed to have \emph{significantly} fewer linear
partitions in all of the lines that we evaluated them on (see
\pref{fig:count_diffnets}).  Further experiments, summarized in
\pref{tab:robustness-hypothesis}, showed that networks trained to be
adversarially robust with PGD and especially DiffAI training methods exhibit up
to $5\times$ fewer linear partitions for the same change in input.  This
observation suggests that the neighborhoods around points in
adversarially-trained networks are ``flat'' (more linear).

\noindent\textbf{Takeaways.} Our results falsify the fundamental assumption
behind the well-known Linear Explanation for adversarial examples. Adversarial
training tends to make networks more linear.

\noindent\textbf{Future Work.} $\problemname$ can be used to investigate
adversarial robustness. Further investigation into why DiffAI-protected networks
tend to be more linear will help resolve the question (raised in this work) of
whether reduced density of linear partitions contributes to robustness, or
increased robustness results in fewer linear partitions (or if there is a third
important variable impacting both).

\section{Conclusion}
\label{sec:Conclusion}

We address the problem of computing a succinct representation of a linear
restriction of a neural network. We presented $\problemname$, a novel primitive
for the analysis of piecewise-linear deep neural networks. Our algorithm runs
in a matter of a few seconds on large convolutional and ReLU networks,
including ACAS Xu, MNIST, and CIFAR10. This allows us to investigate questions
about these networks, both shedding new light and raising new questions about
their behavior.

\subsubsection*{Acknowledgements}
We thank Nina Amenta, Yong Jae Lee, Cindy Rubio-Gonz\'alez, and Mukund
Sundararajan for their feedback and suggestions on this work.
This material is based upon work supported by AWS Cloud Credits for Research. 

\bibliographystyle{unsrtnat}
\bibliography{main}

\clearpage
\section*{Computing Linear Restrictions of Neural Networks: Supplemental}
\appendix
\appendix

\section{Specification of Evaluation Hardware}
\label{app:hw}
Although we do not claim particular performance results, we do point out that
all $\problemname$ uses in our experiments took only a matter of seconds on
commodity hardware (although in some cases the experiments themselves took a
few minutes, particularly when computing gradients for Section 4).

For reproducibility, all experimental data reported was run on Amazon EC2
c5.metal instances, using BenchExec~\cite{benchexec} to limit to 16 CPU cores
and 16 GB of memory. We have also run the results on commodity hardware, namely
an Intel Core i7-7820X CPU at 3.6GHz with 32GB of memory (both resources shared
with others simultaneously), for which the ``matter of seconds''
characterization above holds as well.

All experiments were only run on CPU, although we believe computing
$\problemname$ on GPUs is an important direction for future research on
significantly larger networks.

\section{Uniqueness of $\problemname$}
The \emph{smallest} tuple satisfying the requirements on $\exactline{f}{QR}$ is
unique (when it exists) for any given $f$ and $QR$, and any tuple satisfying
$\exactline{f}{QR}$ can be converted to this minimal tuple by removing any
endpoint $P_i$ which lies on $\overline{P_{i-1}P_{i+1}}$.  In the proceeding
text we will discuss methods for computing \emph{some} tuple satisfying
$\exactline{f}{QR}$; if the reader desires, they can use the procedure
mentioned in the previous sentence to reduce this to the unique smallest such
tuple. However, we note that the algorithms below \emph{usually} produce the
minimal tuple on real-world networks even without any reduction procedure due
to the high dimensionality of the functions involved.

\section{Runtime of $\problemname$ Algorithms}
\label{app:exactline-runtime}
We note that the algorithm corresponding to~\pref{thm:affine-exactline} runs on
a single line segment in constant time producing a single resulting line
segment. The algorithm corresponding to~\pref{eq:alg} runs on a single line
segment in $\mathrm{O}(d)$ time, producing at most $\mathrm{O}(d)$ new
segments. If $w$ is the number of windows and $s$ is the size of each window,
the algorithms for MaxPool and ReLU + MaxPool run in time $\mathrm{O}(ws^2)$
and produce $\mathrm{O}(ws)$ new line segments. Thus, using the algorithm
corresponding to~\pref{thm:multi-layer}, over arbitrarily many affine layers,
$l$ ReLU layers each with $d$ units, and $m$ MaxPool or MaxPool + ReLU layers
with $w$ windows each of size $s$, then at most $\mathrm{O}((d + ws)^{l + m})$
segments may be produced. If only $l$ ReLU and arbitrarily-many affine layers
are used, at most $\mathrm{O}(d^l)$ segments may be produced.

\section{$\problemname$ for Affine Layers}
\label{app:affine-exactline}

\ThmAffineExactline*

\begin{proof}
    By the definition of $\exactline{A}{QR}$ it suffices to show that $\{
        \overline{QR} \}$ partitions $\overline{QR}$ and produce an affine map
    $A'$ such that $A(x) = A'(x)$ for every $x \in \overline{QR}$.

    The first fact follows directly, as $\overline{QR} = \overline{QR}
    \implies \overline{QR} \subseteq \overline{QR}$ and every element of
    $\overline{QR}$ belongs to $\overline{QR}$.

    For the second requirement, we claim that $A' = A$ satisfies the desired
    property, as $A$ is affine and $A(x) = A(x)$ for all $x$ in general and in
    particular for all $x \in \overline{QR}$.
\end{proof}

\section{$\problemname$ for ReLU Layers}
\label{app:relu-exactline}

\ThmReluExactline*
\begin{proof}
    First, we define the ReLU function like so:
    \[
        \begin{bmatrix}
            \delta_{1\mathrm{sign}(x_1)} & 0 & \cdots & 0 \\
            0 & \delta_{1\mathrm{sign}(x_2)} & \cdots & 0 \\
            \vdots & \vdots & \ddots & \vdots \\
            0 & 0 & \cdots & \delta_{1\mathrm{sign}(x_d)} \\
        \end{bmatrix}
        \begin{bmatrix}
            x_1 \\
            x_2 \\
            \vdots \\
            x_d
        \end{bmatrix}
    \]
    where $\mathrm{sign}(x)$ returns $1$ if $x$ is positive and $0$ otherwise while
    $\delta_{ij}$ is the Kronecker delta.

    Now, it becomes clear that, as long as the signs of each $x_i$ are
    constant, the ReLU function is linear.

    We note that, over a Euclidean line segment $\overline{QR}$, we can
    parameterize $\overline{QR}$ as $\overline{QR}(\alpha) = Q + \alpha(R -
    Q)$. Considering the $i$th component, we have a linear relation
    $\overline{QR}_i(\alpha) = Q_i + \alpha(R_i - Q_i)$ which changes sign at
    most once, when $\overline{QR}_i(\alpha) = 0$ (because linear functions in
    $\mathbb{R}$ are continuous and monotonic). Thus, we can solve for the sign
    change of the $i$th dimension as:
    \begin{align*}
            &\overline{QR}_i(\alpha) = 0 \\
            \implies &0 = Q_i + \alpha(R_i - Q_i) \\
            \implies &\alpha = -\frac{Q_i}{R_i - Q_i}
        \end{align*}
    As we have restricted the function to $0 \leq \alpha \leq 1$, at any
    $\alpha$ within these bounds the sign of some component changes and the
    function acts non-linearly. Between any two such $\alpha$s, however, the
    signs of all components are constant, so the ReLU function is perfectly
    described by a linear map as shown above. Finally, we can solve for the
    endpoints corresponding to any such $\alpha$ using the parameterization
    $\overline{QR}(\alpha)$ defined above, resulting in the formula in the
    theorem.

    $Q, R$ are included to meet the partitioning definition, as the sign of
    some element may not be $0$ at the $Q, R$ endpoints.
\end{proof}

\section{$\problemname$ for MaxPool Layers}
\label{app:maxpool-exactline}

As discussed in the paper, although we do not use MaxPool layers in any of our
evaluated networks, we have developed and implemented an algorithm for
computing $\exactline{\mathrm{MaxPool}}{QR}$, which we present here. In
particular, we present $\exactline{\mathrm{MaxPoolWindow}}{QR}$, i.e. the
linear restriction for any given window.  $\exactline{\mathrm{MaxPool}}{MN}$
can be then be computed by separating each window $\overline{QR}$ from
$\overline{MN}$ and applying $\exactline{\mathrm{MaxPoolWindow}}{QR}$. Notably,
there may be duplicate endpoints (e.g. if there is overlap in the windows)
which can be handled by removing duplicates if desired.

\begin{algorithm}[H]
    \DontPrintSemicolon
    \SetKw{returnKw}{return}
    \SetKw{breakKw}{break}
    \KwIn{$\overline{QR}$, the line segment to restrict the MaxPoolWindow
    function to.}
    \KwOut{$\exactline{\mathrm{MaxPoolWindow}}{QR}$}
    $\mathcal{P} \gets [Q]$ \tcp*[l]{Begin an (ordered) list of points with one item, $Q$.}
    $\alpha \gets 0.0$ \tcp*[l]{Ratio along $\overline{QR}$ of the last endpoint in $\mathcal{P}$.}
    $m \gets \mathrm{argmax}(Q)$ \tcp*[l]{Maximum component of the last endpoint in $\mathcal{P}$.}
    \While{$m \neq \mathrm{argmax}(R)$}{
        $D \gets \frac{Q - Q_m}{(R_i - Q_i) - (R - Q)}$\;
        $A \gets \{ (D_i, i) \mid 1 \leq i \leq d \wedge \alpha < D_i < 1.0 \}$\;
        \lIf{$A = \emptyset$}{
            \breakKw
        }
        $(\alpha, m) \gets \mathrm{lexmin}(A)$ \tcp*[l]{Lexicographical minimum of the tuples in $A$.}
        $\mathrm{append}(\mathcal{P}, Q + \alpha\times(R - Q))$\;
    }
    $\mathrm{append}(\mathcal{P}, R)$\;
    \returnKw{$\mathcal{P}$} \tcp*[l]{Interpret the list $\mathcal{P}$ as a tuple and return it.}
    \caption{$\exactline{\mathrm{MaxPoolWindow}}{QR}$. Binary operations
    involving both scalars and vectors apply the operation element-wise to each
    component of the vector.}
    \label{alg:maxpool}
\end{algorithm}

\begin{proof}
    MaxPool applies a separate mapping from each input window to each output
    component, so it suffices to consider each window separately.

    Within a given window, the MaxPool operation returns the value of the
    maximum component. It is thus linear while the index of the maximum
    component remains constant. Now, we parameterize $\overline{QR}(\alpha) = Q
    + \alpha\times(R - Q)$. At each iteration of the loop we solve for the next
    point at which the maximum index changes. Assuming the maximum index is
    $m$ when $\alpha = \alpha_m$, we can solve for the next ratio $\alpha_i >
    \alpha_m$ at which index $i$ will become larger than $m$ like so (again
    realizing that linear functions are monotonic):
    \begin{align*}
            &\overline{QR}_m(\alpha_i) = \overline{QR}_i(\alpha_i) \\
            \implies &Q_m + \alpha_i\times(R_m - Q_m) = Q_i + \alpha_i\times(R_i - Q_i) \\
            \implies &\alpha_i\times(R_m - Q_m + Q_i - R_i) = Q_i - Q_m \\
            \implies &\alpha_i = \frac{Q_i - Q_m}{(R_m - Q_m) + Q_i - R_i}
    \end{align*}
    If $\alpha_m \leq \alpha_i < 1$, then component $i$ becomes larger than
    component $m$ at $\overline{QR}(\alpha_i)$. We can compute this for all
    other indices (producing set $A$ in the algorithm) then find the first
    index that becomes larger than $m$. We assign this index to $m$ and its
    ratio to $\alpha$. If no such index exists, we can conclude that $m$
    remains the maximum until $R$, thus additional endpoints are not needed.

    Thus, within any two points in $P$ the maximum component stays the same, so
    the MaxPool can be exactly replaced with a linear map returning only that
    maximum element.
\end{proof}

At worst, then, for each of the $w$ windows each of size $s$, we may add $O(s)$
new endpoints ($\overline{QR}(\alpha)$ is monotonic in each component so the
maximum index can only change $s$ times), and for each of those $O(s)$ new
endpoints we must re-compute $D$, which requires $\mathrm{O}(s)$ operations.
Thus, the time complexity for each window is $\mathrm{O}(s^2)$ and for the
entire MaxPool computation is $\mathrm{O}(ws^2)$.

Although most practical applications (especially on medium-sized networks) do
not reach that worst-case bound, on extremely large (ImageNet-sized) networks
we have found that such MaxPool computations end up taking the majority of the
computation time. We believe this is an area for future work, perhaps using
GPUs or other deep-learning hardware to perform the analysis.

\section{$\problemname$ for MaxPool + ReLU Layers}
\label{app:relumaxpool-exactline}
When a MaxPool layer is followed by a ReLU layer (or vice-versa), the preceding
algorithm may include a large number of unnecessary points (for example, if the
maximum index changes but the actual value remains less than $0$, the ReLU
layer will ensure that both pieces of the MaxPool output are mapped to the same
constant value $0$). To avoid this, the MaxPool algorithm above can be modified
to check before adding each $P_i$ whether the value at the maximum index is
below $0$ and thus avoid adding unnecessary points. This can be made slightly
more efficient by ``skipping'' straight to the first index where the value
becomes positive, but overall the worst-case time complexity remains
$\mathrm{O}(ws^2)$.

\section{$\problemname$ for General Piecewise-Linear Layers}
\label{app:pwl-exactline}

A more general algorithm can be devised for any piecewise-linear layer, as long
as the input space can be partitioned into finitely-many (possibly unbounded)
convex polytopes, where the function is affine within each one. For example,
\textsc{ReLU} fits this definition where the convex polytopes are the orthants.
Once this has been established, then, we take \emph{the union of the
hyperplanes defining the faces of each convex polytope}. In the \textsc{ReLU}
example, each convex polytope defining the linear regions corresponds to a
single orthant. Each orthant has an ``H-representation'' in the form $\{ x \mid
x_1 \leq 0 \wedge x_2 > 0 \wedge \ldots \wedge x_n \leq 0 \}$, where we say the
corresponding ``hyperplanes defining the faces'' of this polytope are $\{ \{ x
\mid x_1 = 0 \}, \ldots, \{ x \mid x_n = 0 \} \}$ (i.e., replacing the
inequalities in the conjunction with equalities).  Finally, given line segment
$\overline{QR}$, we compare $Q$ and $R$ to each hyperplane individually;
wherever $Q$ and $R$ lie on opposite sides of the hyperplane, we add the
intersection point of the hyperplane with $\overline{QR}$. Sorting the
resulting points gives us a valid $\exactline{f}{QR}$ tuple. If desired, the
minimization described in~\pref{sec:Primitive} can be applied to recover the
unique smallest $\exactline{f}{QR}$ tuple.

The intuition behind this algorithm is exactly the same as that behind the
\textsc{ReLU} algorithm; partition the line such that each resulting segment
lies entirely within a single one of the polytopes. The further intuition here
is that, if a point lies on a particular side of \emph{all of the faces}
defining the set of polytopes, then it must lie entirely within \emph{a single
one} of those polytopes (assuming the polytopes partition the input space).

Note that this algorithm can also be used to compute $\problemname$ for
\textsc{MaxPool} layers, however, in comparison, the algorithm
in~\pref{app:maxpool-exactline} effectively adds two optimizations. First, the
``search space'' of possibly-intersected faces at any point is restricted to
only the faces of the polytope that the last-added point resides in (minimizing
redundancy and computation needed). Second, we always add the first (i.e.,
closest to $Q$) intersection found, so we do not have to sort the points at the
end (we literally ``follow the line''). Such function-specific optimizations
tend to be beneficial when the partitioning of the input space is more complex
(eg.  \textsc{MaxPool}); for component-wise functions like \textsc{ReLU}, the
general algorithm presented above is extremely efficient.

\section{$\problemname$ Over Multiple Layers}
\label{app:multi-layer}

Here we prove~\pref{thm:multi-layer}, which formalizes the intuition that we can
solve $\problemname$ for an entire piecewise-linear network by solving it for
all of the intermediate layers individually. Then, we use $\problemname$ on
each layer in sequence, with each layer partitioning the input line segment
further so that $\problemname$ on each latter layer can be computed on each of
those partitions.

\ThmMultiLayer*

\begin{proof}
    Consider any linear partition of $g$ defined by endpoints $(P_i, P_{i+1})$
    of $\overline{QR}$. By the definition of $\exactline{g}{QR}$,
    there exists some affine map $A_i$ such that $g(x) = A_i(x)$ for any $x \in
    \overline{P_iP_{i+1}}$.

    Now, consider $\exactline{h}{g(P_i)g(P_{i+1})} = (O^i_1 = P_i, O^i_2,
    \ldots, O^i_m = P_{i+1})$. By the definition of
    $\exactline{h}{g(P_i)g(P_{i+1})}$, then, for any partition
    $\overline{O^i_jO^i_{j+1}}$, there exists some affine map $B^i_j$ such that
    $h(x) = B^i_j(x)$ for all $x \in \overline{O^i_jO^i_{j+1}}$.

    Realizing that $O^i_j, O^i_{j+1} \in \overline{g(P_i)g(P_{i+1})}$ and that
    $\overline{P_iP_{i+1}}$ maps to $\overline{g(P_i)g(P_{i+1})}$ under $g$
    (affineness of $g$ over $\overline{P_iP_{i+1}}$), and assuming $g(P_i) \neq
    g(P_{i+1})$ (i.e., $A_i$ is non-degenerate), there exist unique $I^i_j,
    I^i_{j+1} \in \overline{P_iP_{i+1}}$ such that $g(I^i_j) = O^i_j$ and
    $g(I^i_{j+1}) = O^i_{j+1}$. In particular, as affine maps retain ratios
    along lines, we have that:
    \[ I^i_j = P_i + \frac{O^i_j - g(P_i)}{g(P_{i+1}) - g(P_i)}\times (P_{i+1} - P_i)
    \]
    And similar for $I^i_{j+1}$. (In the degenerate case, we can take $I^i_j =
    P_i, I^i_{j+1} = P_{i+1}$ to maintain the partitioning).

    Now, we consider the line segment $\overline{I^i_jI^i_{j+1}} \subseteq
    \overline{P_iP_{i+1}}$. As it is a subset of $\overline{P_iP_{i+1}}$, all
    points $x \in \overline{I^i_jI^i_{j+1}}$ are transformed to $A_i(x) \in
    \overline{g(O^i_j)g(O^i_{j+1})}$ by $g$. Thus, the application of $h$ to
    any such point $y = A_i(x) \in \overline{O^i_jO^i_{j+1}}$ is $B^i_j(y)$,
    and the composition $(B^i_j \circ A_i)(x)$ is an affine map taking points
    $x \in \overline{I^i_jI^i_{j+1}}$ to $f(x)$.

    Finally, as the $O_j$s partition each $\overline{g(P_i)g(P_{i+1})}$ and
    each $\overline{P_iP_{i+1}}$ partitions $\overline{QR}$, and we picked
    $I^i_j$s to partition each $\overline{P_iP_{i+1}}$, the set of $I^i_j$s
    partitions $\overline{QR}$.
\end{proof}

This theorem can be applied for each layer in a network, allowing us to
identify a linear partitioning for the entire network with only linear
partitioning algorithms for each individual layer.

\section{Constant Gradients with $\problemname$}
\label{app:constgrad}

In~\pref{sec:IntegratedGradients}, we relied on the fact that the gradients are
constant within any linear partition given by $\exactline{\textsc{ReLU}}{QR}$
computed with~\pref{eq:alg}. This fact was formalized
by~\pref{thm:relu-constgrad}, which we prove below:

\ThmReluConstgrad*

\begin{proof}
    We first notice that the gradient of the entire network, when it is
    well-defined, is determined completely by the signs of the internal
    activations (as they control the action of the ReLU function).

    Thus, as long as the signs of the internal activations are constant, the
    gradient will be constant as well.

    Equation 4 in our paper identifies partitions where the signs of the
    internal activations are constant. Therefore, the gradient in each of those
    regions $\overline{P_iP_{i+1}}$ is also constant.
\end{proof}

However, \emph{in general}, for arbitrary $f$, it is possible that the action
of $f$ may be affine over the line segment $\overline{P_iP_{i+1}}$ but not
affine (or not describable by a single $A_i$) when considering points
arbitrarily close to (but not lying on) $\overline{P_iP_{i+1}}$. In other
words, the definition of $\exactline{f}{QR}$ as presented in our paper only
requires that the \emph{directional derivative} in the direction of
$\overline{QR}$ is constant within each linear partition
$\overline{P_iP_{i+1}}$, not the gradient more generally. A stronger definition
of $\problemname$ could integrate such a requirement, but we present the
weaker, more-general definition in the text for clarity of exposition.

However, as demonstrated in the above theorem, this stronger requirement
\emph{is} met by~\pref{eq:alg}, thus our exact computation of Integrated
Gradients is correct.

\section{Further $\problemname$ Implementation Details}
\label{app:implementation}

We implemented our algorithms for computing $\exactline{f}{QR}$ in C++ using a
gRPC server with Protobuf interface. This server can be called by a
fully-fledged Python interface which allows one to define or import networks
from ONNX~\cite{ONNXNetworks} or ERAN~\cite{ERAN} formats and compute
$\exactline{f}{QR}$ for them. For the ACAS Xu experiments, we converted the
ACAS Xu models provided in the ReluPlex~\cite{reluplex:CAV2017} repository to
ERAN format for analysis by our tool.

Internally, we represent $\exactline{f}{QR}$ by a vector of endpoints, each
with a \emph{ratio} along $\overline{QR}$ (i.e., $\alpha$ for the
parameterization $\overline{QR}(\alpha) = Q + \alpha\times(R - Q)$), the layer
at which the endpoint was introduced, and the corresponding post-image after
applying $f$ (or however many layers have been applied so far).

On extremely large (ImageNet-sized) networks, storing the internal
network activations corresponding to each of the thousands of endpoints
requires significant memory usage (often hundreds of gigabytes), so our
implementation sub-divides $\overline{QR}$ when necessary to control memory
usage.
However, for all tested networks, our implementation was extremely fast and
could compute $\exactline{f}{QR}$ for all experiments in seconds.

\section{Floating Point Computations}
\label{app:floatingpoint}

As with ReluPlex \cite{reluplex:CAV2017}, we make use of floating-point
computations in all of our implementations, meaning there may be some slight
inaccuracies in our computations of each $P_i$. However, where float
inaccuracies in ReluPlex correspond to hard-to-interpret errors relating to
pivots of simplex tableaus, floating point errors in our algorithms are easier
to interpret, corresponding to slight miscomputation in the exact position of
each linear partition endpoint $P_i$. In practice, we have found that these
errors are small and unlikely to cause meaningful issues with the use of
$\problemname$.  With that said, improving accuracy of results while retaining
performance is a major area of future work for both ReluPlex and
$\problemname$.

\section{Future Work}
\label{sec:FutureWork}
Apart from the future work described previously, $\problemname$ itself can be
further generalized. For example, while our algorithm is extremely fast (a
number of seconds) on medium- and large-sized convolutional and feed-forward
networks using ReLU non-linearities, it currently takes over an hour to execute
on large ImageNet networks due to the presence of extremely large MaxPool
layers. Scaling the algorithm and implementation (perhaps by using GPUs for
verification, modifying the architecture of the network itself, or involving
safe over-approximations) is an exciting focus for future work. Furthermore, we
plan to investigate the use of safe linear over-approximations to commonly used
non-piecewise-linear activation functions (such as $\tanh$) to analyze a wider
variety of networks.  Finally, we can generalize $\problemname$ to compute
restrictions of networks to higher-dimensional input regions, which may allow
the investigation of even more novel questions.

\end{document}